\pgfplotsset{compat=newest}
\definecolor{codegreen}{rgb}{0,0.6,0}
\definecolor{codegray}{rgb}{0.5,0.5,0.5}
\definecolor{codepurple}{rgb}{0.58,0,0.82}
\definecolor{backcolour}{rgb}{0.95,0.95,0.92}
\lstdefinestyle{mystyle}{
    backgroundcolor=\color{backcolour},   
    commentstyle=\color{codegreen},
    keywordstyle=\color{magenta},
    numberstyle=\tiny\color{codegray},
    stringstyle=\color{codepurple},
    basicstyle=\ttfamily\footnotesize,
    breakatwhitespace=false,         
    breaklines=true,                 
    captionpos=b,                    
    keepspaces=true,                 
    numbers=left,                    
    numbersep=5pt,                  
    showspaces=false,                
    showstringspaces=false,
    showtabs=false,                  
    tabsize=2
}
\newtheorem{theorem}{Theorem}[section]
\newtheorem{proposition}[theorem]{Proposition}
\theoremstyle{remark}
\newcommand{\ours}{\texttt{TACO}\xspace}
\definecolor{sourcecolor}{rgb}{0.5,1,0.5}
\definecolor{ourcolor}{rgb}{1,0,0}
\definecolor{singlecolor}{rgb}{0,0,1}
\definecolor{auxcolor}{rgb}{0.54,0.17,0.89}
\definecolor{linearcolor}{rgb}{0.172549019607843,0.627450980392157,0.172549019607843}
\definecolor{randomcolor}{rgb}{1,0.498039215686275,0.0549019607843137}
\definecolor{tunecolor}{rgb}{0.9568627450980393, 0.8156862745098039,0}
\definecolor{aligncolor}{rgb}{0,0.5,0}
\DeclareMathOperator*{\argmax}{arg\,max}
\def\equationautorefname~#1\null{Eq~#1\null}
\renewcommand{\sectionautorefname}{\S\kern-0.2em}
\renewcommand{\subsectionautorefname}{\S\kern-0.2em}
\renewcommand{\subsubsectionautorefname}{\S\kern-0.2em}
\title{\ours: Temporal Latent Action-Driven Contrastive Loss for Visual Reinforcement Learning}
\author{
Ruijie Zheng\textsuperscript{\rm 1} 
\qquad
Xiyao Wang\textsuperscript{\rm 1} \\
\qquad
\textbf{Yanchao Sun \textsuperscript{\rm 1}}
\qquad
\textbf{Shuang Ma\textsuperscript{\rm 5}} 
\qquad
\textbf{Jieyu Zhao\textsuperscript{\rm 1, \rm 2}}\\ 
\qquad
\textbf{Huazhe Xu\textsuperscript{\rm 3, \rm 4, \textsection}}
\qquad
\textbf{Hal Daumé III}\textsuperscript{\rm 1, \rm 5, \textsection}  
\qquad
\textbf{Furong Huang}\textsuperscript{\rm 1, \textsection}  \\
\textsuperscript{\rm 1} 
\text{University of Maryland, College Park}
\textsuperscript{\rm 2} University of Southern California   \\
\textsuperscript{\rm 3} Tsinghua University    \textsuperscript{\rm 4} Shanghai Qi Zhi Institute \\
\textsuperscript{\rm 5} Microsoft Research \\
\texttt{ rzheng12@umd.edu} \\
}
\begin{document}

\maketitle
\begingroup\renewcommand\thefootnote{\textsection}
\footnotetext{Equal advising}
\begingroup\renewcommand\thefootnote{*}
\footnotetext{Our code is released at \href{https://github.com/FrankZheng2022/TACO}{https://github.com/FrankZheng2022/TACO}.}
\begin{abstract}
Despite recent progress in reinforcement learning (RL) from raw pixel data, sample inefficiency continues to present a substantial obstacle. 
Prior works have attempted to address this challenge by creating self-supervised auxiliary tasks, aiming to enrich the agent's learned representations with control-relevant information for future state prediction.
However, these objectives are often insufficient to learn representations that can represent the optimal policy or value function, and they often consider tasks with small, abstract discrete action spaces and thus overlook the importance of action representation learning in continuous control.
In this paper, we introduce \ours: \textbf{T}emporal \textbf{A}ction-driven \textbf{CO}ntrastive Learning, a simple yet powerful temporal contrastive learning approach that facilitates the concurrent acquisition of latent state and action representations for agents. 
\ours simultaneously learns a state and an action representation by optimizing the mutual information between representations of current states paired with action sequences and representations of the corresponding future states. 
Theoretically, \ours can be shown to learn state and action representations that encompass sufficient information for control, thereby improving sample efficiency.
For online RL, \ours achieves 40\% performance boost after one million environment interaction steps on average across nine challenging visual continuous control tasks from Deepmind Control Suite. In addition, we show that \ours can also serve as a plug-and-play module adding to existing offline visual RL methods to establish the new state-of-the-art performance for offline visual RL across offline datasets with varying quality. 

\end{abstract}

\section{Introduction}
\label{s1:intro}
Developing reinforcement learning (RL) agents that can perform complex continuous control tasks from high-dimensional observations, such as pixels, has been a longstanding challenge.
A central aspect of this challenge lies in addressing the sample inefficiency problem in visual RL. Despite significant progress in recent years~\citep{laskin20curl, laskin20rad, yarats2021drq, yarats2022drqv2,hafner19planet, Hafner2020Dream, hafner2021mastering, hafner2023mastering, nicklas22tdmpc, wang2023coplanner}, there remains a considerable gap in sample efficiency between RL with physical state-based features and those with pixel inputs. 
This disparity is particularly pronounced in complex tasks; thus tackling it is crucial for advancing visual RL algorithms and enabling their practical application in real-world scenarios. \looseness=-1

\begin{wrapfigure}{r}{0.4\columnwidth}
\centering
\includegraphics[page=3,scale=0.23]{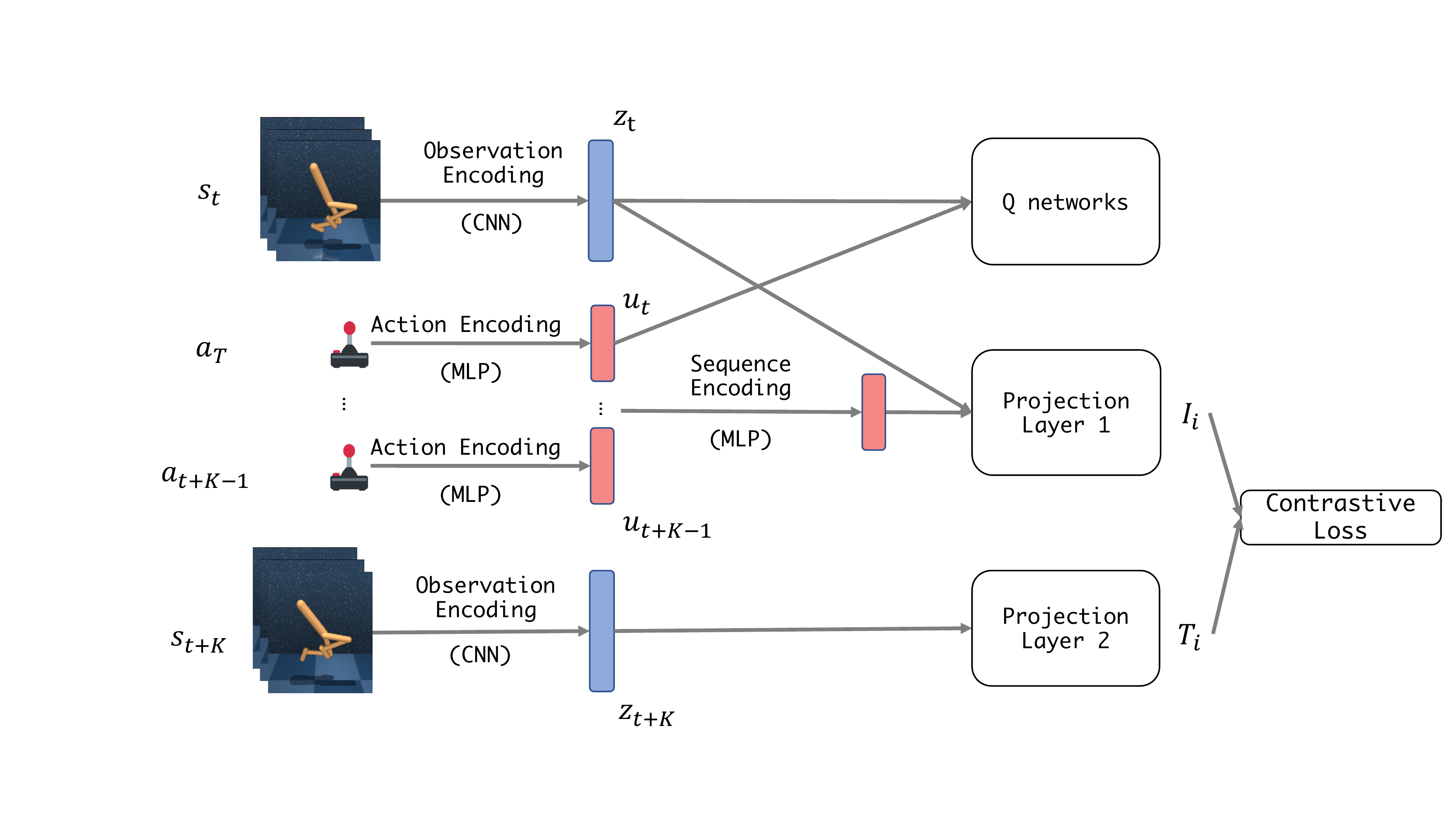}
\caption{Comparison of average episode reward across nine challenging tasks in Deepmind Control Suite after one million environment steps.} 
\label{fig:avg_performance}
\vspace{-1.2em}
\end{wrapfigure}
State representation learning has become an essential aspect of RL research, aiming to improve sample efficiency and enhance agent performance. Initial advancements like CURL~\cite{laskin20curl} utilized a self-supervised contrastive InfoNCE objective~\cite{oord2019representation} for state representation, yet it overlooks the temporal dynamics of the environment.
Subsequent works, including CPC~\cite{henaff20cpcv2}, ST-DIM~\cite{ankesh2019stdim}, and ATC~\cite{adam21atc}, made progress to rectify this by integrating temporal elements into the contrastive loss,
linking pairs of observations with short temporal intervals. The objective here was to develop a state representation capable of effectively predicting future observations.
A more comprehensive approach was taken by DRIML~\cite{bogdan20driml}, which incorporated the first action of the action sequence into the temporal contrastive learning framework.
However, these methods, while innovative, have their shortcomings. The positive relations in their contrastive loss designs are often policy-dependent, potentially leading to instability during policy updates throughout the training process. Consequently, they lack the theoretical foundation needed to capture all information representing the optimal policy. 
Furthermore, these methods except for CURL and ATC typically focus on environments such as Atari games with  well-represented, abstract discrete action spaces, thereby overlooking the importance of action representation in continuous control tasks~\cite{allshire2021laser, jeon20act}.
However, by learning an action representation that groups semantically similar actions together in the latent action space, the agent can better generalize its knowledge across various state-action pairs, enhancing the sample efficiency of RL algorithms. 
Therefore, learning both state and action representations is crucial for enabling the agent to more effectively reason about its actions' long-term outcomes for continuous control tasks.

In this paper, we introduce \textbf{T}emporal \textbf{A}ction-driven \textbf{CO}ntrastive Learning~(\ours) as a promising approach to visual continuous control tasks.
\ours simultaneously learns a state and action representation by optimizing the mutual information between representations of current states paired with action sequences and representations of the corresponding future states. 
By optimizing the mutual information between state and action representations, \ours can be theoretically shown to capture the essential information to represent the optimal value function.
In contrast to approaches such as DeepMDP~\cite{gelada2019deepmdp} and SPR~\cite{schwarzer2021dataefficient}, which directly model the latent environment dynamics, our method transforms the representation learning objective into a self-supervised InfoNCE objective. 
This leads to more stable optimization and requires minimal hyperparameter tuning efforts. 
Consequently, \ours yields expressive and concise state-action representations that are better suited for high-dimensional continuous control tasks.

We demonstrate the effectiveness of representation learning by \ours through extensive experiments on the DeepMind Control Suite~(DMC) in both online and offline RL settings. 
\ours is a flexible plug-and-play module that could be combined with any existing RL algorithm.
In the online RL setting, combined with the strong baseline DrQ-v2\cite{yarats2022drqv2}, \ours significantly outperforms the SOTA model-free visual RL algorithms, and it even surpasses the strongest model-based visual RL baselines such as Dreamer-v3~\cite{hafner2023mastering}.
As shown in \autoref{fig:avg_performance}, across nine challenging visual continuous control tasks from DMC, \ours achieves a 40\% performance boost after one million environment interaction steps on average.
For offline RL, \ours can be combined with existing strong offline RL methods to further improve performance. 
When combined with TD3+BC~\cite{scott21td3bc} and CQL~\cite{aviral20cql}, \ours outperforms the strongest baselines across offline datasets with varying quality. 

We list our contributions as follows:
\begin{enumerate}[leftmargin=2em]
\vspace{-0.5em}
 \item[1.] We present \ours, a simple yet effective temporal contrastive learning framework that simultaneously learns state and action representations. 
 \vspace{-0.2em}
 \item[2.] The framework of \ours is flexible and could be integrated into both online and offline visual RL algorithms with minimal changes to the architecture and hyperparameter tuning efforts. 
 \vspace{-0.2em}
 \item[3.] We theoretically show that the objectives of \ours is sufficient to capture the essential information in state and action representation for control.
 \vspace{-0.2em}
 \item[4.] Empirically, we show that \ours outperforms prior state-of-the-art model-free RL by 1.4x on nine challenging tasks in Deepmind Control Suite. Applying TACO to offline RL with SOTA algorithms also achieves significant performance gain in 4 selected challenging tasks with pre-collected offline datasets of various quality. 

\end{enumerate}

\section{Preliminaries}
\subsection{Visual reinforcement learning}
Let $\mathcal{M}=\langle\mathcal{S}, \mathcal{A}, \mathcal{P}, \mathcal{R}, \gamma\rangle$ be a Markov Decision Process (MDP). Here, $\mathcal{S}$ is the state space, and $\mathcal{A}$ is the action space. The state transition kernel is denoted by $\mathcal{P}: \mathcal{S}\times \mathcal{A} \rightarrow \Delta(\mathcal{S})$, where $\Delta(\mathcal{S})$ is a distribution over the state space. $\mathcal{R}:\mathcal{S}\times \mathcal{A}\rightarrow \mathbb{R}$ is the reward function.
The objective of the Reinforcement Learning (RL) algorithm is the identification of an optimal policy $\pi^*:\mathcal{S}\rightarrow \Delta(\mathcal{A})$ that maximizes the expected value $\mathbb{E}_\pi[\sum_{t=0}^{\infty}\gamma^t r_t]$. Additionally, we can define the optimal Q function as follows: $Q^*(s, a) = E_{\pi^*}\big[\sum_{t=0}^{\infty} \gamma^t r(s_t, a_t)|s_0=s, a_0=a\big]$, such that the relationship between the optimal Q function and optimal policy is $\pi(s)=\argmax_a Q^*(s,a)$. In in the domain of visual RL, high-dimensional image data are given as state observations, so the simultaneous learning of both representation and control policy becomes the main challenge. This challenge is exacerbated when the environment interactions are limited and the reward signal is sparse.



\subsection{Contrastive learning and the InfoNCE objective}
Contrastive Learning, a representation learning approach, imposes similarity constraints on representations, grouping similar/positive pairs and distancing dissimilar/negative ones within the representation space. 
Contrastive learning objective is often formulated through InfoNCE loss~\cite{oord2019representation} to maximize the mutual information between representations of positive pairs by training a classifier. In particular, let $X,Y$ be two random variables. Given an instance $x\sim p(x)$, and
a corresponding positive sample $y^+\sim p(y|x)$ as well as a collection of $Y = \{y_1,..., y_{N-1}\}$ of $N-1$ random samples from the marginal distribution $p(y)$, the InfoNCE loss is defined as
\begin{equation}
    \mathcal{L}_{N}=\mathbb{E}_{x}\Big[\log\frac{f(x,y^+)}{\sum_{y\in Y\cup\{y^+\}}f(x,y)}\Big]
\end{equation}
Optimizing this loss will result in $\displaystyle f(x,y)\propto \frac{p(y|x)}{p(y)}$ and one can show that InfoNCE loss upper bounds the mutual information, $\mathcal{L}_{N} \geq \log(N)-\mathcal{I}(X,Y)$. 



\section{\ours: temporal action-driven contrastive Loss}
\ours is a flexible temporal contrastive framework that could be easily combined with any existing RL algorithms by interleaving RL updates with its temporal contrastive loss update. 
In this section, we first presnt the overall learning objective and theoretical analysis of \ours. Then we provide the architectural design of \ours in detail.
\subsection{Temporal contrastive learning objectives and analysis}

In the following, we present the learning objectives of \ours. 
The guiding principle of our method is to learn state and action representations that capture the essential information about the environment's dynamics sufficient for learning the optimal policy.
This allows the agent to develop a concise and expressive understanding of both its current state and the potential effects of its actions, thereby enhancing sample efficiency and generalization capabilities.

Let $S_t$, $A_t$ be the state and action variables at timestep $t$, $Z_t=\phi(S_t)$, $U_t=\psi(A_t)$ be their corresponding representations. Then, our method aims to maximize the mutual information between representations of current states paired with action sequences and representations of the corresponding future states:
\begin{equation}
\label{eq:mi_obj}
\mathbb{J}_{\text{\ours}}=\mathcal{I}(Z_{t+K}; [Z_t,U_t,..., U_{t+K-1}])
\end{equation}
Here, $K\ge1$ is a fixed hyperparameter for the prediction horizon. In practice, we estimate the lower bound of the mutual information by the InfoNCE loss, with details of our practical implementation described in \autoref{implementation}. \looseness=-1 

We introduce the following theorem extending from Rakely et al.~\cite{rakelly2021which} to demonstrate the sufficiency of \ours objective:

\begin{theorem}\label{thm1}
Let $K\in \mathbb{N}^+$, and $\mathbb{J}_\text{\ours}=\mathcal{I}(Z_{t+K}; [Z_t,U_t,..., U_{t+K-1}])$. If for a given state and action representation $\phi_Z, \psi_U$, $\mathbb{J}_\text{\ours}$ is maximized, then for arbitrary state-action pairs $(s_1,a_1), (s_2, a_2)$ such that $\phi(s_1)=\phi(s_2), \psi(a_1)=\psi(a_2)$, it holds that $Q^*(s_1, a_1)=Q^*(s_2,a_2)$.
\end{theorem}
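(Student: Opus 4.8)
The plan is to convert the hypothesis ``$\mathbb{J}_{\text{TACO}}$ is maximized'' into a concrete \emph{sufficiency} statement about how the representations propagate the dynamics, and then to lift that statement from the transition level up to $Q^*$ through a Bellman fixed-point argument. The structural fact I would lean on throughout is that $[Z_t, U_t, \ldots, U_{t+K-1}]$ is a \emph{deterministic coarsening} of the raw variables $[S_t, A_t, \ldots, A_{t+K-1}]$, obtained by applying $\phi$ and $\psi$ coordinatewise; this is exactly the setting in which information inequalities do the heavy lifting.

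First I would invoke the data processing inequality. Writing $g = (\phi, \psi, \ldots, \psi)$ so that $[Z_t, U_t, \ldots, U_{t+K-1}] = g([S_t, A_t, \ldots, A_{t+K-1}])$, we obtain the Markov chain $Z_{t+K} - [S_t, A_t, \ldots, A_{t+K-1}] - [Z_t, U_t, \ldots, U_{t+K-1}]$ and hence $\mathcal{I}(Z_{t+K}; [Z_t, U_t, \ldots, U_{t+K-1}]) \le \mathcal{I}(Z_{t+K}; [S_t, A_t, \ldots, A_{t+K-1}])$. Within a representation class rich enough to attain this bound, maximizing $\mathbb{J}_{\text{TACO}}$ forces equality, which is precisely the condition that $[Z_t, U_t, \ldots, U_{t+K-1}]$ is a sufficient statistic of $[S_t, A_t, \ldots, A_{t+K-1}]$ for $Z_{t+K}$, equivalently the conditional independence $Z_{t+K} \perp [S_t, A_t, \ldots, A_{t+K-1}] \mid [Z_t, U_t, \ldots, U_{t+K-1}]$. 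Thus the predictive law of the future state representation depends on the past only through the representations, so whenever $\phi(s_1) = \phi(s_2)$ and $\psi(a_1) = \psi(a_2)$ the induced distributions over $Z_{t+K}$ coincide; I would record this as a well-defined abstract $K$-step transition kernel $\bar{P}$ on the latent space (``representation-transition sufficiency'').

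I would then lift this to $Q^*$ by induction on the $\gamma$-contractive Bellman optimality operator. Collapsing states and actions that share a representation defines an abstraction, and transition sufficiency says this abstraction is a model-irrelevance (bisimulation-type) abstraction for the dynamics: if the value-iteration iterate $Q_n$ is constant on abstraction classes, then the next-state distribution seen through $\phi$ — and hence $\mathbb{E}[\,V_n(S')\,]$ — depends on the pair $(s,a)$ only through $(\phi(s), \psi(a))$, so $Q_{n+1}$ is constant on classes as well. Passing to the fixed point of the contraction yields that $Q^*$ itself is constant on classes, i.e. $Q^*(s_1,a_1) = Q^*(s_2,a_2)$, which is the claim.

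The main obstacle is the reward term. The objective $\mathbb{J}_{\text{TACO}}$ constrains only how the representations carry the \emph{dynamics}, whereas the Bellman recursion defining $Q^*$ also contains the immediate reward $R(s,a)$, and the information-theoretic argument above gives no control over whether $R$ factors through $(\phi, \psi)$; two states with the same $Z$ and identical latent dynamics but different immediate rewards would break the conclusion, so a reward-sufficiency ingredient is indispensable for both the base case and the inductive step. I would close this gap in the spirit of the sufficiency conditions of Rakelly et al.~\cite{rakelly2021which}, either by carrying the explicit hypothesis that the representation preserves reward (so $R(s,a)$ is a function of $(\phi(s), \psi(a))$) or by noting that the deployed objective augments $\mathbb{J}_{\text{TACO}}$ with a reward/Bellman prediction term. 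Two secondary technical points I would handle are the reduction from the $K$-step sufficiency the objective actually produces to the one-step form used in the recursion (or, equivalently, running the whole induction with the $K$-step Bellman operator), and the realizability and almost-everywhere/support caveats that accompany the equality case of the data processing inequality.
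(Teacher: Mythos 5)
Your proposal is correct in outline but reaches the conclusion by a genuinely different route than the paper. You share the opening move: the data processing inequality turns ``maximized'' into the equality $\mathcal{I}(Z_{t+K};[Z_t,U_{t:t+K-1}])=\mathcal{I}(S_{t+K};[S_t,A_{t:t+K-1}])$, i.e.\ the representations are a sufficient statistic of $(S_t,A_{t:t+K-1})$ for the future latent state. From there, however, the paper does not run a Bellman induction. It propagates the mutual-information equality to the return-to-go random variable $X=\sum_{i}\gamma^{i}R_{t+K+i}$ (\cref{prop1}, proved by contradiction via two applications of the chain rule for mutual information together with the conditional independences of the graphical model), and then invokes Lemma~2 of Rakelly et al.~\cite{rakelly2021which} (\cref{prop2}) to convert that equality into a factorization of the conditional law of the return given $(s,a_{t:t+K-1})$ through the representations, from which equality of $Q$ at representation-equivalent pairs is read off directly; the $K>1$ case is reduced to $K=1$ by a further data-processing argument, exactly the reduction you anticipate needing. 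Your route instead packages one-step transition sufficiency as a model-irrelevance (bisimulation-type) abstraction and lifts it to $Q^*$ through the fixed point of the Bellman optimality operator. Both lifting mechanisms are sound; yours is more elementary and makes explicit exactly which two ingredients are required (latent-transition sufficiency and reward sufficiency), while the paper's is shorter once Rakelly et al.'s lemma is granted and speaks directly about distributions of returns rather than iterates. Your flagged obstacle about the immediate reward is well taken and is in fact passed over silently in the paper: \cref{prop1} controls only the return from time $t+K$ onward, yet the final step of the paper's proof applies the factorization to the full return $\overline{R}_t$, which includes $R_t,\dots,R_{t+K-1}$; ensuring the reward factors through $(\phi,\psi)$ --- either as an assumption encoded in the graphical model or via the auxiliary reward-prediction loss, as you suggest --- is needed to close that step in either proof.
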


This theorem guarantees that if our mutual information objective \Cref{eq:mi_obj} is maximized, then for any two state-action pairs $(s_1,a_1)$ and $(s_2,a_2)$ with equivalent state and action representations, their optimal action-value functions, $Q^*(s_1, a_1)$ and $Q^*(s_2,a_2)$, will be equal. 
In other words, maximizing this mutual information objective ensures that the learned representations are sufficient for making optimal decisions. 


\subsection{\ours implementation}
\label{implementation}

\begin{figure}[!htbp]
 \centering
  \includegraphics[page=5, scale=0.4]{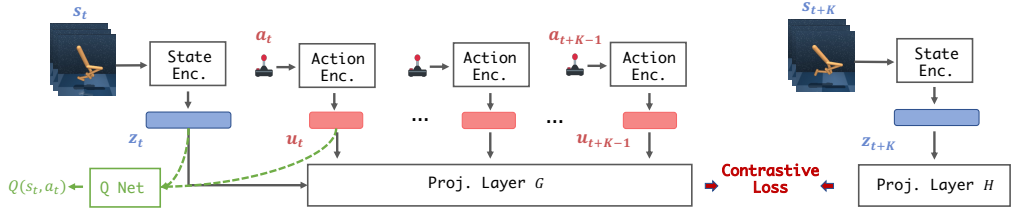}
\caption{A demonstration of our temporal contrastive loss: Given a batch of state-action transition triples $\{(s_{t}^{(i)},[a^{(i)}_t,..., a^{(i)}_{t+K-1}],s^{(i)}_{t+K})\}_{i=1}^{N}$, we first apply the state encoder and action encoder to get latent state-action encodings: $\{(z^{(i)}_t,[u^{(i)}_t,..., u^{(i)}_{t+K-1}],z^{(i)}_{t+K})\}_{i=1}^{N}$. Then we apply two different projection layers to map $(z^{(i)}_t,[u^{(i)}_t,..., u^{(i)}_{t+K-1}])$ and $z^{(i)}_{t+K}$ into the shared contrastive embedding space. Finally, we learn to predict the correct pairings between $(z_t,[u_t,..., u_{t+K-1}])$ and $z_{t+K}$ using an InfoNCE loss.}
\label{fig:demo_diagram}
\end{figure}

Here we provide a detailed description of the practical implementation of \ours. In Figure~\ref{fig:demo_diagram}, we illustrate the architecture design of \ours. 
Our approach minimally adapts a base RL algorithm by incorporating the temporal contrastive loss as an auxiliary loss during the batch update process. Specifically, given a batch of state and action sequence transitions $\{(s_{t}^{(i)},[a^{(i)}_t,..., a^{(i)}_{t'-1}],s^{(i)}_{t'})\}_{i=1}^{N}$ ($t'=t+K$), we optimize:
\vspace{-0.5em}
\begin{equation}
\mathcal{J}_\ours(\phi, \psi, W, G_\theta, H_\theta)=-\frac{1}{N}\sum_{i=1}^{N} \log \frac{{g^{(i)}_t}^\top Wh^{(i)}_{t'}}{\sum_{j=1}^{N}{g^{(i)}_t}^\top W
h^{(j)}_{t'}}
\end{equation}
Here let $z_t^{(i)}=\phi(s_t^{(i)})$, $u_t^{(i)}=\psi(a_t^{(i)})$ be state and action embeddings respectively. $g_t^{(i)}=G_\theta(z_{t}^{(i)},u^{(i)}_t,..., u^{(i)}_{t'-1})$, and $h_t^{(i)}=H_\theta(z_{t'}^{(i)})$, where $G_\theta$ and $H_\theta$ denote two learnable projection layers that map the latent state $z_t^{(i)}$ as well as latent state and action sequence $(z_{t}^{(i)},u^{(i)}_t,..., u^{(i)}_{t'-1})$ to a common contrastive embedding space. 
$W$ is a learnable parameter providing a similarity measure between $g_i$ and $h_j$ in the shared contrastive embedding space.
Subsequently, both state and action representations are fed into the agent's $Q$ network, allowing the agent to effectively reason about the long-term effects of their actions and better leverage their past experience through state-action abstractions. \looseness=-1

In addition to the main \ours objective, in our practical implementation, we find that the inclusion of two auxiliary objectives yields further enhancement in the algorithm's overall performance. The first is the CURL~\cite{laskin20curl} loss:
\begin{equation}
\mathcal{J}_\text{CURL}(\phi, \psi, W, H_\theta)=-\frac{1}{N}\sum_{i=1}^{N} \log \frac{{h^{(i)}_t}^\top W{h^{(i)}_t}^+}{{h^{(i)}_t}^\top W{h^{(i)}_t}^+ +\sum_{j\neq i}{h^{(i)}_t}^\top W{h^{(j)}_t}}
\end{equation}
Here, ${h^{(i)}_t}^+=H_\theta(\phi({s_i^{(t)}}^+))$, where ${s_i^{(t)}}^+$ is the augmented view of $s_i^{(t)}$ by applying the same random shift augmentation as DrQ-v2~\cite{yarats2022drqv2}. $W$ and $H_\theta$ share the same weight as the ones in \ours objective. 
The third objective is reward prediction:
\begin{equation}
\mathcal{J}_\text{Reward}(\phi, \psi, \hat{R}_\theta)=\sum_{i=1}^{N} \big(\hat{R}_\theta(z^{(i)}_t, u^{(i)}_t,..., u^{(i)}_{t'-1}])-r^{(i)}\big)^2
\end{equation}
Here $r^{(i)}=\sum_{j=t}^{t'-1}r^{(i)}_{j}$ is the sum of reward from timestep $t$ to $t'-1$, and $\hat{R}_\theta$ is a reward prediction layer.
For our final objective, we combine the three losses together with equal weight.
As verified in Section~\ref{s5s1:online_rl}, although \ours serves as the central objective that drives notable performance improvements, the inclusion of both CURL and reward prediction loss can further improve the algorithm's performance.\looseness=-1

We have opted to use DrQ-v2~\cite{yarats2022drqv2} for the backbone algorithm of \ours, although in principle, \ours could be incorporated into any visual RL algorithms.
\ours extends DrQ-v2 with minimal additional hyperparameter tuning.
The only additional hyperparameter is the selection of the prediction horizon $K$. 
Throughout our experiments, we have limited our choice of $K$ to either 1 or 3, depending on the nature of the environment. We refer the readers to \autoref{a2:timestep} for a discussion on the choice of $K$.


\section{Experiments and results}
\label{s5:experiment}
This section provides an overview of our empirical evaluation, conducted in both online and offline RL settings. 
To evaluate our approach under online RL, we apply \ours to a set of nine challenging visual continuous control tasks from Deepmind Control Suite (DMC)~\cite{tassa2018deepmind}. 
Meanwhile, for offline RL, we combine \ours with existing offline RL methods and test the performance on four DMC tasks, using three pre-collected datasets that differ in the quality of their data collection policies.


\subsection{Comparison between \ours and strong baselines in online RL tasks}
\label{s5s1:online_rl}
\textbf{Environment Settings}: In our online RL experiment, we first evaluate the performance of \ours on nine challenging visual continuous control tasks from Deepmind Control Suite~\cite{tassa2018deepmind}: Quadruped Run, Quadruped Walk, Hopper Hop, Reacher Hard, Walker Run, Acrobot Swingup, Cheetah Run, Finger Turn Hard, and Reach Duplo. 
These tasks demand the agent to acquire and exhibit complex motor skills and present challenges such as delayed and sparse reward. 
As a result, these tasks have not been fully mastered by previous visual RL algorithms, and require the agent to learn an effective policy that balances exploration and exploitation while coping with the challenges presented by the tasks.

In addition to Deepmind Control Suite, we also present the results of \ours on additional six challenging robotic manipulation tasks from Meta-world~\cite{yu2021metaworld}: Hammer, Assembly, Disassemble, Stick Pull, Pick Place Wall, Hand Insert. 
Unlike the DeepMind Control Suite, which primarily concentrates on locomotion tasks, Meta-world domain
provides tasks that involve complex manipulation and interaction tasks. 
This sets it apart by representing a different set of challenges, emphasizing precision and control in fine motor tasks rather
than broader locomotion skills. 
In \autoref{app7:metaworld}, we provide a visualization for each Meta-world task.

\textbf{Baselines}: We compare \ours with four model-free visual RL algorithms \textbf{CURL}~\cite{laskin20curl}, \textbf{DrQ}~\cite{yarats2021drq}, \textbf{DrQ-v2}~\cite{yarats2022drqv2}, and \textbf{A-LIX}~\cite{cetin22alix}. \textbf{A-LIX} builds on \textbf{DrQ-v2} by adding adaptive regularization to the encoder’s gradients. 
While \ours could also extend \textbf{A-LIX}, our reproduction of results from its open-source implementation does not consistently surpass \textbf{DrQ-v2}. As such, we do not choose \textbf{A-LIX} as the backbone algorithm for \ours.
Additionally, we compare with two state-of-the-art model-based RL algorithms for visual continuous control, \textbf{Dreamer-v3}~\cite{hafner2023mastering} and 
\textbf{TDMPC}~\cite{nicklas22tdmpc}, which learn world models in latent space and select actions using either model-predictive control or a learned policy. \looseness=-1


\textbf{\ours achieves a significantly better sample efficiency and performance compared with SOTA visual RL algorithm.} The efficacy of \ours is evident from the findings presented in \autoref{fig:online_rl} (DMC) , \autoref{tab:online_table} (DMC), and \autoref{fig:online_rl_mw} (Meta-world). In contrast to preceding model-free visual RL algorithms, \ours exhibits considerably improved sample efficiency. For example, on the challenging Reacher Hard task, \ours achieves optimal performance in just 0.75 million environment steps, whereas DrQ-v2 requires approximately 1.5 million steps. 
When trained with only 1 million environment steps, \ours on average achieves 40\% better performance, and it is even better than the model-based visual RL algorithms Dreamer-v3 and TDMPC on 6 out of 9 tasks. 
In addition, on more demanding tasks such as Quadruped Run, Hopper Hop, Walker Run, and Cheetah Run, \ours continues to outshine competitors, exhibiting superior overall performance after two or three million steps, as illustrated in \autoref{fig:online_rl}.
For robotic manipulation tasks, as shown in \autoref{fig:online_rl_mw}, \ours also significantly outperform the baseline model-free visual RL algorithms, highlighting the broad applicability of \ours.

\clearpage \newpage
\begin{wrapfigure}{r}{0.35\columnwidth}
\vspace{-0.0em}
\centering
 \includegraphics[scale=0.14]{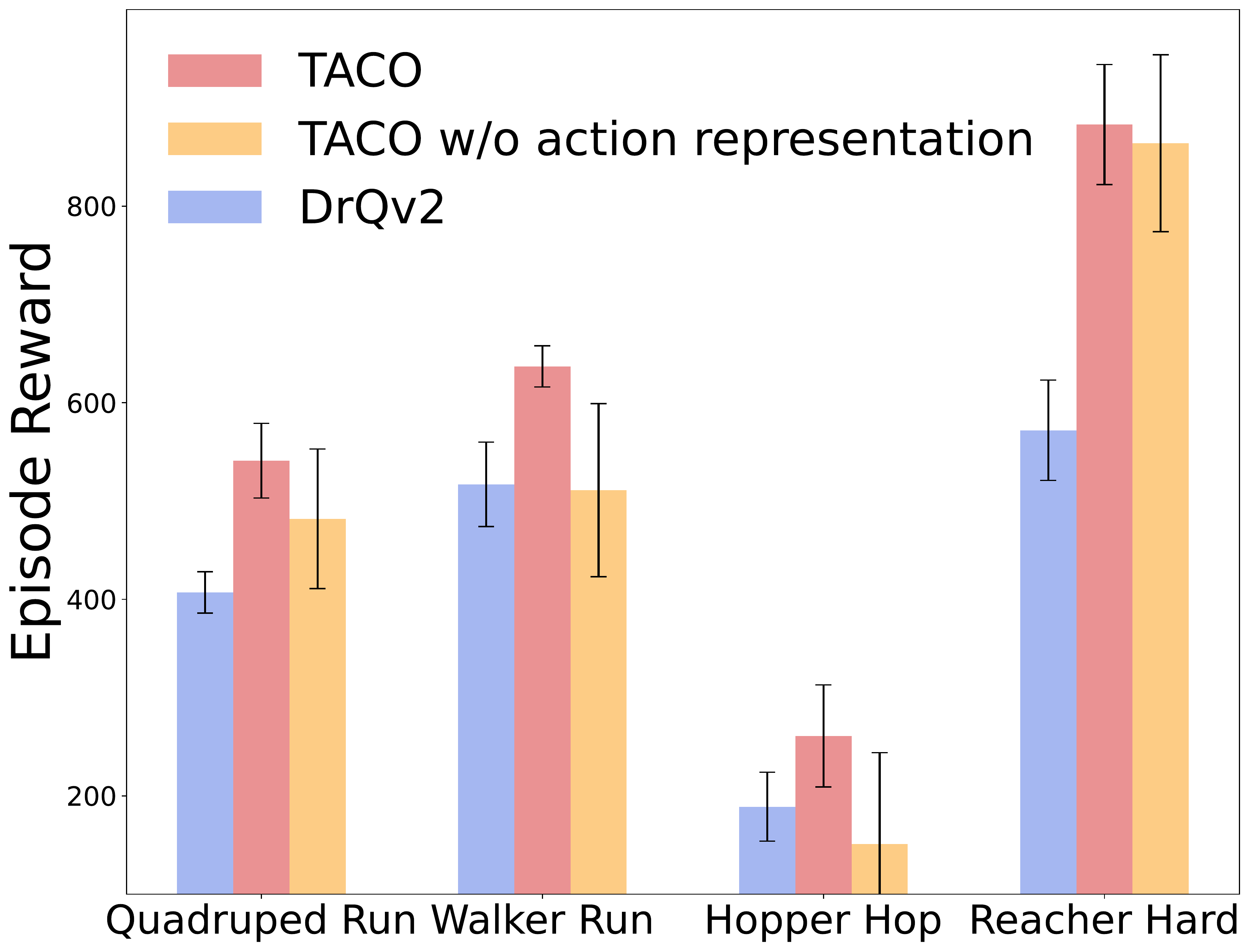} 
 \caption{1M Performance of \ours with and without action representation}
 \vspace{-1.0em}
 \label{fig:action_representation}
\end{wrapfigure}


\textbf{Concurrently learning state and action representation is crucial for the success of \ours.} To demonstrate the effectiveness of action representation learning in {\ours}, we evaluate its performance on a subset of 4 difficult benchmark tasks and compare it with a baseline method without action representation, as shown in \autoref{fig:action_representation}.
The empirical results underscore the efficacy of the temporal contrastive learning objectives, even in the absence of action representation. 
For instance, \ours records an enhancement of 18\% on Quadruped Run and a substantial 51\% on Reacher Hard, while the remaining tasks showcase a performance comparable to DrQ-v2. 
Furthermore, when comparing against \ours without action representation, \ours achieves a consistent performance gain, ranging from 12.2\% on Quadruped Run to a significant 70.4\% on Hopper Hop. 
These results not only emphasize the inherent value of the temporal contrastive learning objective in \ours, but  also underscore the instrumental role of high-quality action representation in bolstering the performance of the underlying RL algorithms.

\begin{figure}[t!]
    \centering
    \includegraphics[scale=0.08]{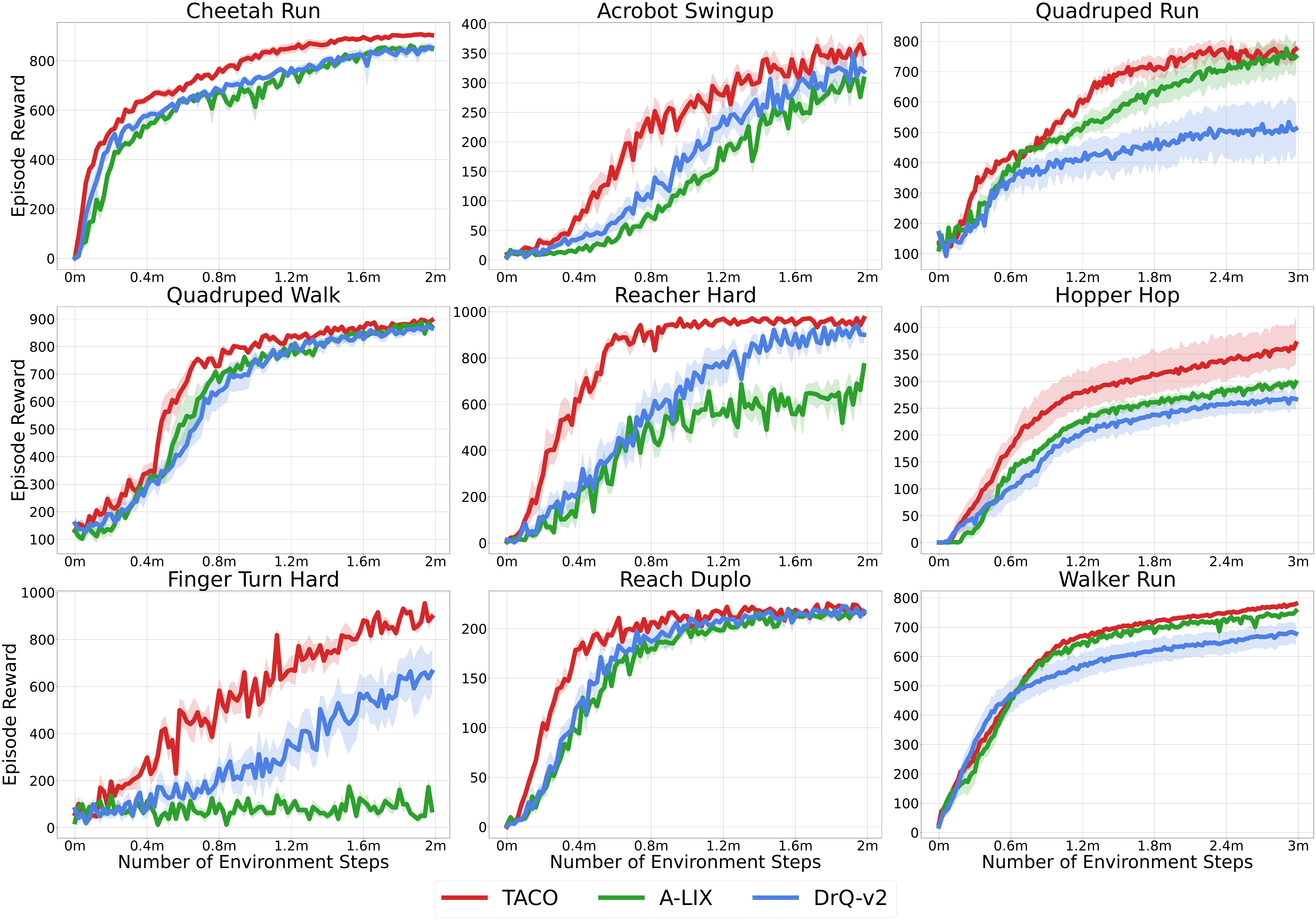} 
    \caption{(\textbf{Deepmind Control Suite}) Performance of \ours against two strongest model-free visual RL baselines. Results 
    of DrQ-v2 and A-LIX are reproduced from their open-source implementations, and all results are averaged over 6 random seeds.\looseness=-1
    }
    \label{fig:online_rl}
    \vspace{-1.5em} 
\end{figure}
\begin{table}[t!]
\centering
\vspace{-0.1em}
 \caption{Episode reward of \ours and SOTA visual RL algorithms on the image-based DMControl 1M benchmark. Results are averaged over 6 random seeds.  Within the table, \colorbox{gray!25}{entries shaded} represent the best performance of model-free algorithms, while text \textbf{in bold} signifies the highest performance across all baseline algorithms, including model-based algorithms.}
 \label{tab:online_table}
\resizebox{0.9\textwidth}{!}{%
\begin{tabular}{@{} c >{\columncolor{gray!25}}c c c c c | c c@{}}
\multicolumn{6}{c}{\emph{Model-free}} & \multicolumn{2}{c}{\emph{Model-based}} \\
\toprule
 Environment (1M Steps) & \textbf{\ours} & \textbf{DrQv2} & \textbf{A-LIX} & \textbf{DrQ} & \textbf{CURL} & \textbf{Dreamer-v3} & \textbf{TDMPC} \\
\midrule
Quadruped Run & $\mathbf{{541\pm 38}}$ & $407\pm 21$ & $454\pm 42$ &  $179\pm 18$ &  $181\pm 14$ &  $331\pm 42$ & $397\pm 37$ \\
Hopper Hop & $261\pm 52$ & $189\pm 35$ &  $225\pm 13$ & $192\pm 41$ &  $152\pm 34$ &  $\mathbf{369\pm 21}$ & $195\pm 18$ \\ 
Walker Run & $637\pm 11$ & $517\pm 43$ &  $617\pm 12$ & $451\pm 73$ &  $387\pm 24$ &  $\mathbf{765\pm 32}$ & $600\pm 28$ \\ 
Quadruped Walk & $\mathbf{793\pm 8}$ & $680\pm 52$ &  $560 \pm 175$ & $120\pm 17$ &  $123 \pm 11$ &  $353\pm 27$ & $435\pm 16$ \\ 
Cheetah Run & $\mathbf{{821\pm 48}}$ & $691\pm 42$ &  $676 \pm 41$ & $474\pm 32$ &  $657\pm 35$ &  $728\pm 32$ & $565\pm 61$ \\ 
Finger Turn Hard & $632\pm 75$ & $220\pm 21$ &  $62\pm 54$ & $91\pm 9$ &  $215\pm 17$ &  $\mathbf{810\pm 58}$ & $400\pm 113$ \\ 
Acrobot Swingup & $\mathbf{{241\pm 21}}$ & $128\pm 8$ &  $112\pm 23$ & $24\pm 8$ &  $5\pm 1$ &  $210\pm 12$ & $224\pm 20$\\ 
Reacher Hard & $\mathbf{{883\pm 63}}$ & $572\pm 51$ &  $510\pm 16$ & $471\pm 45$ &  $400\pm 29$ &  $499\pm 51$ & $485\pm 31$ \\ 
Reach Duplo & $\mathbf{{234 \pm 21}}$ & $206\pm 32$ &  $199\pm 14$ & $36\pm 7$ &  $8\pm 1$ &  $119\pm 30$ & $117\pm 12$ \\ 
\midrule
\end{tabular}%
 }
 \vspace{-1.5em}
 \end{table}

\begin{figure}[t!]
    \centering
    \includegraphics[scale=0.08]{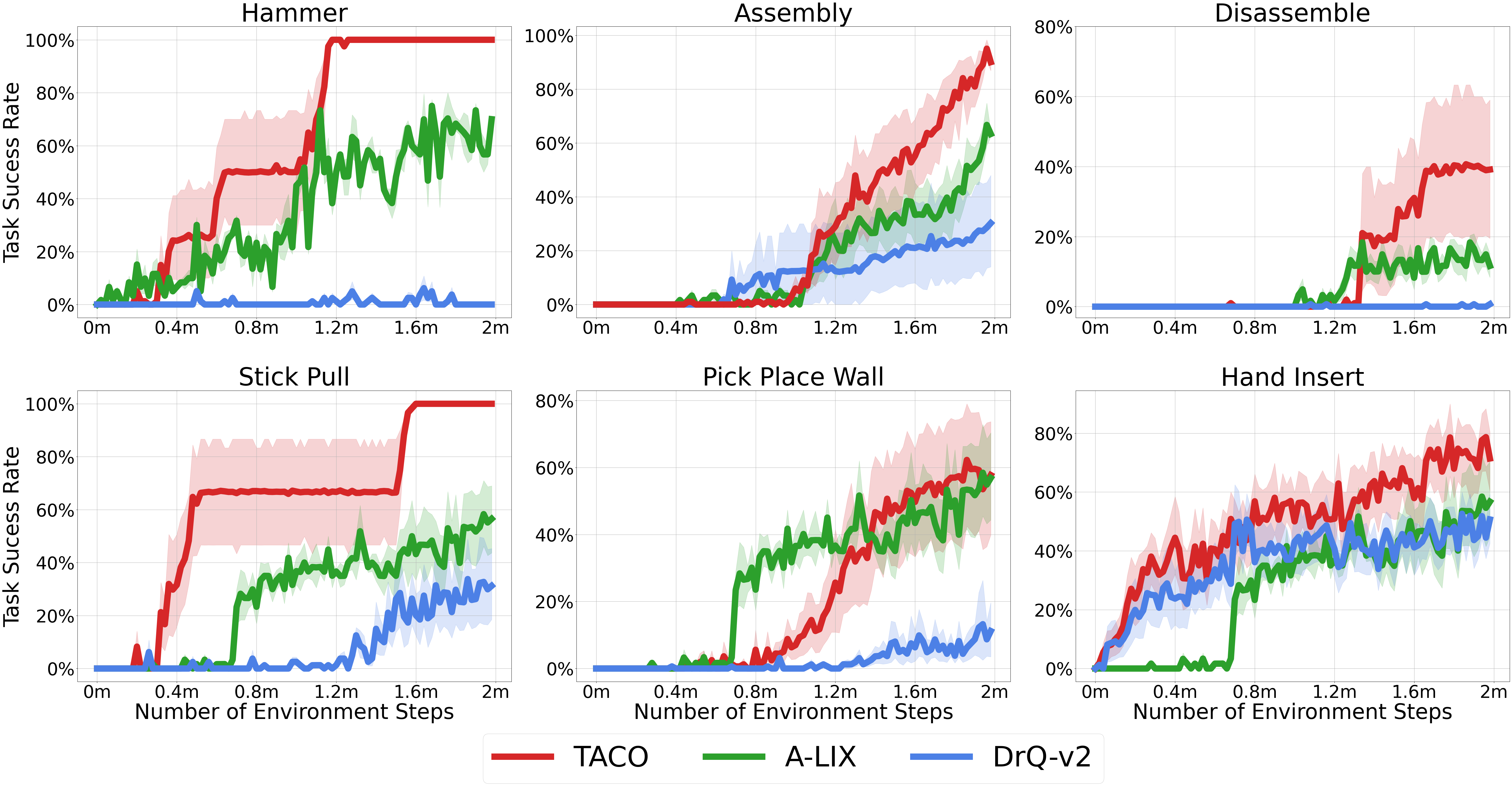} 
    \caption{(\textbf{Meta-world}) Performance of \ours against  DrQ-v2 and A-LIX. All results are averaged over 6 random seeds.\looseness=-1
    }
    \label{fig:online_rl_mw}
    \vspace{-1.5em} 
\end{figure}


\begin{wrapfigure}[11]{R}{6.5cm}
\vspace{-15pt}
\centering
\includegraphics[scale=0.17]{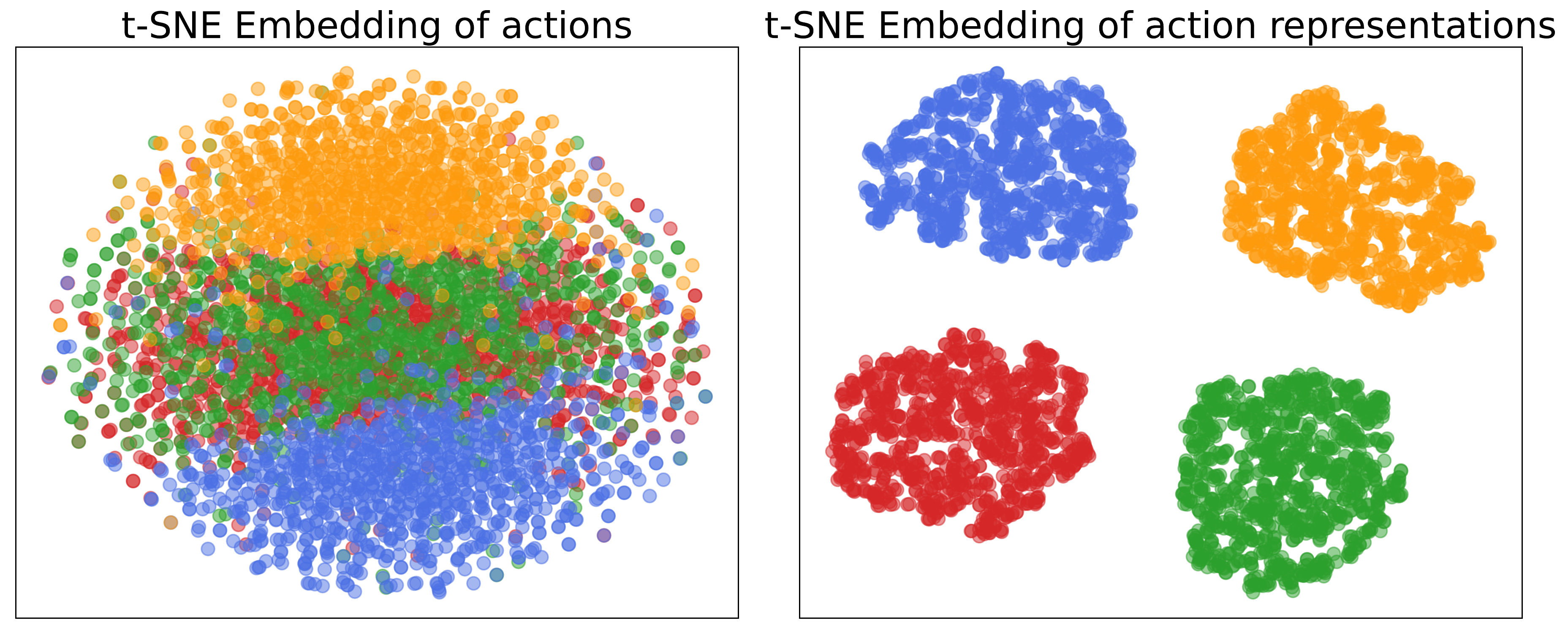} 
\caption{\textbf{Left}: t-SNE embedding of actions with distracting dimensions. \textbf{Right}: t-SNE embedding of latent representations for actions with distracting dimensions.}\label{fig:tsne_act}
\vspace{-25pt}
\end{wrapfigure}
\textbf{\ours learns action representations that group semantically similar actions together.} To verify that indeed our learned action representation has grouped semantically similar actions together, we conduct an experiment within the Cheetah Run task. We artificially add 20 dimensions to the action space of task Cheetah Run, although only the first six were utilized in environmental interactions. 
We first train an agent with \ours online to obtain the action representation. 
Then we select four actions within the original action space $a_1, a_2, a_3, a_4$ to act as centroids. 
For each of the four centroids, we generate 1000 augmented actions by adding standard Gaussian noises to the last 20 dimensions.
We aim to determine if our action representation could disregard these ``noisy'' dimensions while retaining the information of the first six. Using t-SNE for visualization, we embed the 4000 actions before and after applying action representation. 
As shown in \autoref{fig:tsne_act}, indeed our learned action representation could group the four clusters, demonstrating the ability of our action representation to extract control relevant information from the raw action space. \looseness=-1


\textbf{The effectiveness of our temporal contrastive loss is enhanced with a larger batch size.} As is widely acknowledged in contrastive learning research~\cite{chen20simclr, he20moco, radford21clip, grill20byol}, our contrastive loss sees significant benefits from utilizing a larger batch size. In \autoref{sfig:batch_size_quadruped_run}, we illustrate the performance of our algorithms alongside DrQv2 after one million environment steps on the Quadruped Run task. As evident from the plot, batch size greatly influences the performance of our algorithm, while DrQ-v2's baseline performance remains fairly consistent throughout training. In order to strike a balance between time efficiency and performance, we opt for a batch size of 1024, which is 4 times larger than the 256 batch size employed in DrQ-v2, but 4 times smaller than the 4096 which is commonly used in the contrastive learning literature~\cite{chen20simclr, he20moco, grill20byol}. For an analysis on how batch size affects the algorithm's runtime, we direct the reader to \autoref{a3:time_efficiency}.

\begin{figure}[h!]
\captionsetup[subfigure]{justification=centering}
\centering
\vspace{-1.0em}
 \begin{subfigure}[h!]{0.48\textwidth}
\includegraphics[scale=0.2]{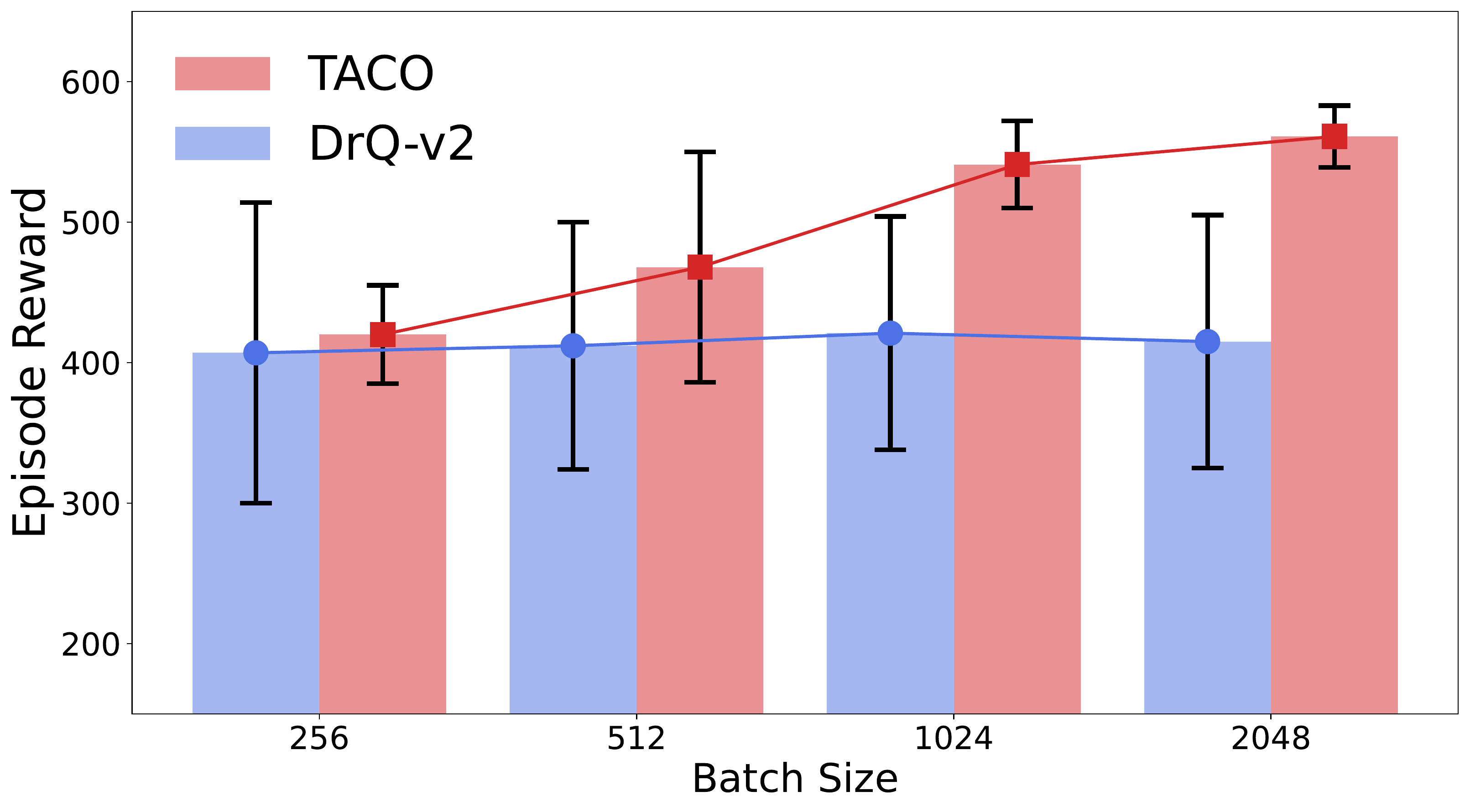} 
\vspace{-0.5em}
 \subcaption{
    \ours and DrQ-v2 across different batch sizes.
  }
  \label{sfig:batch_size_quadruped_run}
 \end{subfigure}
 \hfill
 \centering
 \raisebox{0.0\height}{
\begin{subfigure}[h!]{0.48\textwidth}
\includegraphics[scale=0.2]{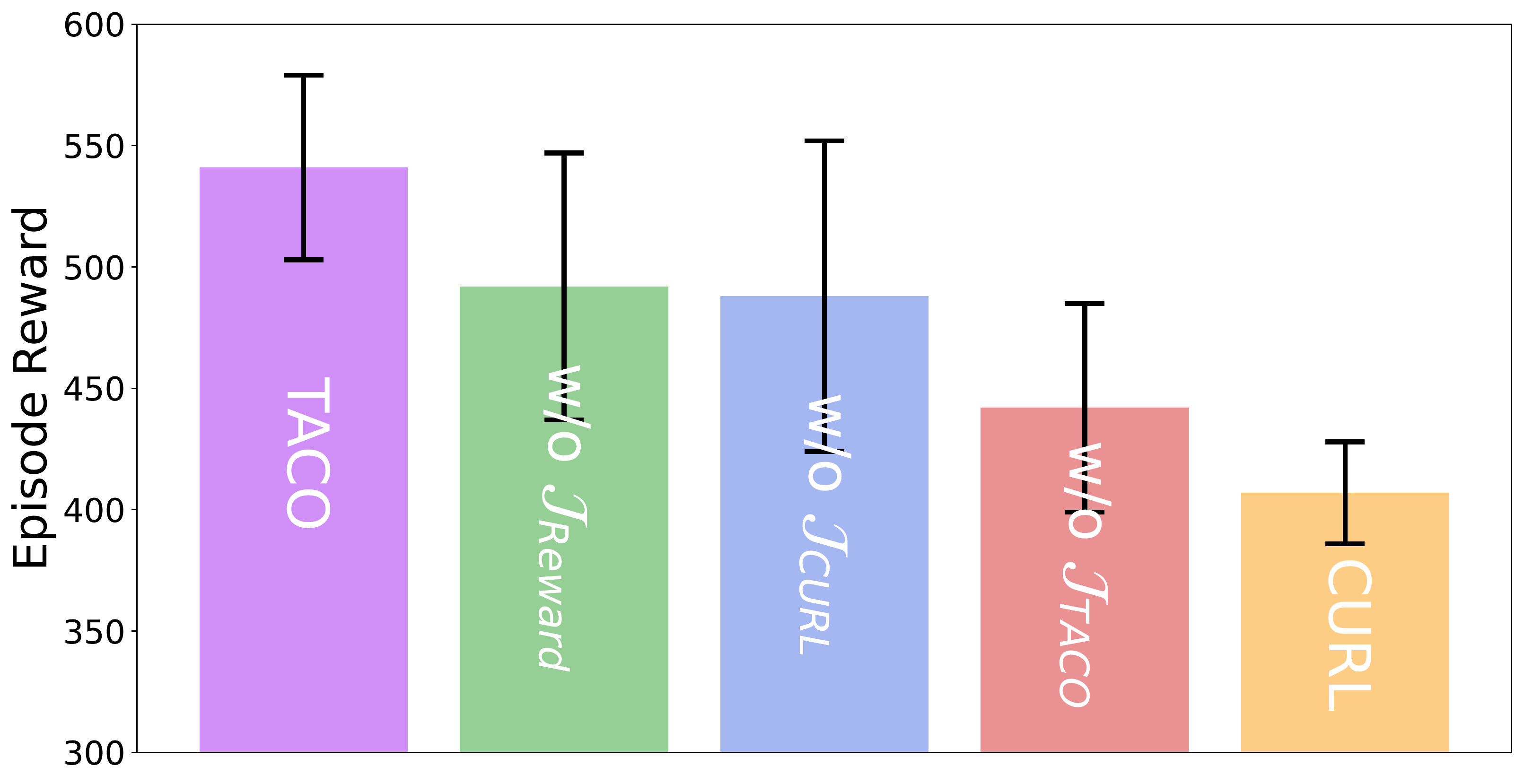}
\vspace{0.6em}
 \subcaption{
    \ours with different learning objective removed
  }
  \label{sfig:objective_ablation}
 \end{subfigure}
 }
 \vspace{-0.7em}
 \caption{\ours with different batch sizes and with different component of its learning objectives removed}
 \vspace{-0.5em}
 \end{figure}

\textbf{Reward prediction and CURL loss serves an auxiliary role to further improve the performance of \ours, while the temporal contrastive loss of \ours is the most crucial component.}
In the practical deployment of \ours, two additional objectives, namely reward prediction and CURL loss, are incorporated to enhance the algorithm's performance. 
In \autoref{sfig:objective_ablation}, we remove one objective at a time on the Quadruped Run task to assess its individual impact on the performance after one million environment steps.
As illustrated in the figure, the omission of \ours' temporal contrastive objective results in the most significant performance drop, emphasizing its critical role in the algorithm's operation. 
Meanwhile, the auxiliary reward prediction and CURL objectives, although secondary, contribute to performance improvement to some degree.


\textbf{InfoNCE-based temporal action-driven contrastive objective in \ours outperforms other representation learning objectives including SPR~\cite{schwarzer2021pretraining},  ATC~\cite{adam21atc}, and DRIML~\cite{bogdan20driml}}. 
In \autoref{tab:spr_table}, we have showcased a comparison between our approach and other visual RL representation learning objectives such as \textbf{SPR}, \textbf{ATC}, and \textbf{DRIML}. Given that \textbf{SPR} and \textbf{DRIML} were not initially designed for continuous control tasks, we have re-implemented their learning objectives using the identical backbone algorithm, DrQ-v2.
A similar approach was taken for ATC, with their learning objectives also being reimplemented on DrQ-v2 to ensure a fair comparison. 
(Without the DrQ-v2 backbone algorithm, the performance reproduced by their original implementation is significantly worse.) 
Furthermore, recognizing the significance of learning action encoding, as discussed earlier, we have integrated action representation learning into all these baselines. 
Therefore, the model architecture remains consistent across different representation learning objectives, with the sole difference being the design of the temporal contrastive loss.
For \textbf{DRIML}, given that only the first action of the action sequence is considered in the temporal contrastive loss, \ours and \textbf{DRIML} differ when the number of steps $K$ is greater than one. Thus, we indicate \textbf{N/A} for tasks where we choose $K=1$ for \ours.
\begin{table}[!htbp]
\centering
\vspace{-1.0em}
 \caption{Comparison with other objectives including SPR~\cite{schwarzer2021dataefficient}, ATC~\cite{adam21atc}, and DRIML~\cite{bogdan20driml} \label{tab:spr_table} \looseness=-1}
\resizebox{0.75\textwidth}{!}{%
\begin{tabular}{@{} c c c c c|c @{}}
\toprule
Environment & \textbf{\ours} & \textbf{SPR} & \textbf{ATC} & \textbf{DRIML} & \textbf{DrQ-v2} \\
\midrule
Quadruped Run & $\mathbf{{541\pm 38}}$ & $448 \pm 79$ & $432\pm 54$ & \textbf{N/A} & $407\pm 21$ \\
 Walker Run & $\mathbf{{637\pm 21}}$ & $560\pm 71$ & $502 \pm 171$ & \textbf{N/A} & $517\pm 43$ \\ 
Hopper Hop & $\mathbf{{261\pm 52}}$ & $154 \pm 10$ & $112 \pm 98$ & $216\pm 13$ & $192\pm 41$ \\ 
 Reacher Hard & $\mathbf{{883\pm 63}}$ &  $711\pm 92$ & $863\pm 12$ & $835\pm 72$ & $572\pm 51$ \\
 Acrobot Swingup & $\mathbf{{241\pm 21}}$ &  $198\pm 21$ & $206\pm 61$ & $222\pm 39$ & $210\pm 12$ \\
 \bottomrule
\end{tabular}%
 }
 \vspace{-0.5em}
 \end{table}
\vspace{-0.6em}

\autoref{tab:spr_table} showcases that while previous representation learning objectives have proven benefit in assisting the agent to surpass the DrQ-v2 baseline by learning a superior representation, our approach exhibits consistent superiority over other representation learning objectives in all five evaluated environments.
These results reinforce our claim that \ours is a more effective method for learning state-action representations, allowing agents to reason more efficiently about the long-term outcomes of their actions in the environment.

\subsection{Combining \ours with offline RL algorithms}
\label{s5s2:offline_rl}
In this part, we discuss the experimental results of \ours within the context of offline reinforcement learning, emphasizing the benefits our temporal contrastive state/action representation learning objective brings to visual offline RL. 
Offline visual reinforcement learning poses unique challenges, as algorithms must learn an optimal policy solely from a fixed dataset without further interaction with the environment. 
This necessitates that the agent effectively generalizes from limited data while handling high-dimensional visual inputs. 
The state/action representation learning objective of \ours plays a vital role in addressing these challenges by capturing essential information about the environment's dynamics, thereby enabling more efficient generalization and improved performance. 
\ours can be easily integrated as a plug-and-play module on top of existing strong offline RL methods, such as \textbf{TD3+BC}~\cite{scott21td3bc} and \textbf{CQL}~\cite{aviral20cql}. \looseness=-1

For evaluation, we select four challenging visual control tasks from DMC: Hopper Hop, Cheetah Run, Walker Run, and Quadruped Run.
For each task, we generate three types of datasets. 
The \textbf{medium} dataset consists of trajectories collected by a single policy of medium performance. 
The precise definition of ``medium performance'' is task-dependent but generally represents an intermediate level of mastery, which is neither too poor nor too proficient. 
The \textbf{medium-replay} dataset contains trajectories randomly sampled from the online learning agent's replay buffer before it reaches a medium performance level. 
The \textbf{full-replay} dataset includes trajectories randomly sampled throughout the online learning phase, from the beginning until convergence. 
The dataset size for Walker, Hopper, and Cheetah is 100K, while for the more challenging Quadruped Run task, a larger dataset size of 500K is used to account for the increased difficulty. 
We compute the normalized reward by diving the offline RL reward by the best reward we get during online \ours training. 
\begin{table}[!htbp]
\vspace{-0.em}
\centering
\captionsetup{justification=centering}
 \caption{Offline Performance (Normalized Reward) for different offline RL methods. Results are averaged over 6 random seeds. $\pm$ captures the standard deviation over seeds.}
 \label{tab:offline_table}
\resizebox{1.0\textwidth}{!}{%
\begin{tabular}{@{} c c |>{\columncolor{gray!25}}c c| >{\columncolor{gray!25}}c c| c c c @{}}
\toprule
 \multicolumn{2}{c}{$\textbf{Task/ Dataset}$} & $\textbf{TD3+BC w. \ours}$ & $\textbf{TD3+BC}$ & $\textbf{CQL w. \ours}$ & $\textbf{CQL}$ & $\textbf{DT}$ & \textbf{IQL} & $\textbf{BC}$ \\
\midrule
\multirow{2}{*}{Hopper Hop}  & Medium & $\mathbf{52.4\pm 0.4}$ & $51.2\pm 0.8$ & $\mathbf{47.9\pm 0.6}$ & $46.7\pm 0.2$ & $40.5\pm 3.6$ & $2.0\pm 1.7$ & $48.2 \pm 0.8$ \\
                             & Medium-replay & $\mathbf{67.2\pm 0.1}$ & $62.9 \pm 0.1$ & $\mathbf{74.6\pm 0.4}$ & $68.7 \pm 0.1$ & $65.3\pm 1.8$ & $57.6\pm 1.4$ & $25.9\pm 3.2$ \\ 
                             & Full-replay & $\mathbf{97.6\pm 1.4}$ & $83.8 \pm 2.3$ & $\mathbf{101.2\pm 1.9}$ & $94.2\pm 2.0$ & $92.4\pm 0.3$ & $47.7\pm 2.2$ & $65.7\pm 2.7$ \\  
\midrule

\multirow{2}{*}{Cheetah Run} & Medium & $\mathbf{66.6\pm 0.5}$ & $66.1\pm 0.3$ & $\mathbf{70.1\pm 0.4}$ & $66.7\pm 1.7$ & $64.3\pm 0.7$ & $1.7\pm 1.1$ & $62.9\pm 0.1$ \\ 
                             & Medium-replay & $\mathbf{62.6\pm 0.2}$ & $61.1\pm 0.1$ & $\mathbf{72.3\pm 1.2}$ & $67.3\pm 1.1$ & $67.0\pm 0.6$ & $26.5\pm 3.2$ & $48.0\pm 3.1$ \\ 
                             & Full-replay & $\mathbf{92.5\pm 2.4}$ & $91.2\pm 0.8$ & $\mathbf{86.9\pm 2.4}$ & $65.0\pm 3.9$ & $89.6\pm 1.4$ & $14.6\pm 3.7$ & $69.0 \pm 0.3$ \\ 
\midrule

\multirow{2}{*}{Walker Run}  & Medium & $\mathbf{49.2\pm 0.5}$ & $48.0\pm 0.2$ & $\mathbf{49.6\pm 1.0}$ & $49.4\pm 0.9$ & $47.3\pm 0.3$ & $4.4\pm 0.4$ & $46.2\pm 0.6$ \\
                             & Medium-replay & $\mathbf{63.1 \pm 0.6}$ & $62.3 \pm 0.2$ & $\mathbf{62.3\pm 2.6}$ & $59.9\pm 0.9$ & $61.7\pm 1.1$ & $41.4\pm 2.8$ & $18.5\pm 0.8$ \\ 
                             & Full-replay & $\mathbf{86.8\pm 0.6}$ & $84.0 \pm 1.6$ & $\mathbf{88.1 \pm 0.1}$ & $79.8\pm 0.6$ & $81.6\pm 0.8$ & $18.1\pm 3.7$ & $30.8 \pm 1.8$ \\ 

\midrule

\multirow{2}{*}{Quadruped Run} 
                                & Medium & $\mathbf{60.6\pm 0.1}$ & $60.0\pm 0.2$ & $\mathbf{58.1\pm 3.7}$ & $55.9 \pm 9.1$ & $14.6\pm 3.8$ & $0.8\pm 0.8$ & $56.2\pm 1.1$ \\
                                & Medium-replay & $\mathbf{61.3\pm 0.3}$ & $58.1\pm 0.5$ & $\mathbf{61.9\pm 0.2}$ & $61.2\pm 0.9$ & $19.5\pm 2.2$ & $58.4\pm 4.4$ & $51.6\pm 3.3$ \\ 
                                & Full-replay & $\mathbf{92.6\pm 0.7}$ & $89.3\pm 0.4$ & $\mathbf{92.1\pm 0.1}$ & $85.2\pm 2.5$ & $14.5\pm 1.1$ & $36.3\pm 5.9$ & $57.6\pm 0.7$ \\ 
\bottomrule
\multicolumn{2}{r}{$\textbf{Average Normalized Score}$} & $\textbf{71.0}$ & $68.2$  & $\mathbf{72.1}$ & $66.7$ & 48.4 & 25.8 & 54.9 \\
\bottomrule
\end{tabular}%
 }
 \vspace{-0.em}
 \end{table}

\vspace{-0.0em}

We compare the performance of \textbf{TD3+BC} and \textbf{CQL} with and without \ours on our benchmark. 
Additionally, we also compare with the decision transformer (\textbf{DT})~\cite{chen2021dt}, a strong model-free offline RL baseline that casts the problem of RL as conditional sequence modeling, \textbf{IQL}~\cite{kostrikov2022iql}, another commonly used offline RL algorithm, and the behavior cloning (\textbf{BC}) baseline. 
For \textbf{TD3+BC}, \textbf{CQL} and \textbf{IQL}, which were originally proposed to solve offline RL with vector inputs, we add the their learning objective on top of DrQ-v2 to handle image inputs.

\autoref{tab:offline_table} provides the normalized reward for each dataset. 
The results underscore that when combined with the strongest baselines \textbf{TD3+BC} and \textbf{CQL}, \ours achieves consistent performance improvements across all tasks and datasets, setting new state-of-the-art results for offline visual reinforcement learning. 
This is true for both the medium dataset collected with a single policy and narrow data distribution, as well as the medium-replay and replay datasets with a diverse distribution.\looseness=-1



\section{Related work}
\label{s5:related_work}


\subsection{Contrastive learning in visual reinforcement learning}
\label{s5s2:contrastive_rl}
Contrastive learning has emerged as a powerful technique for learning effective representations across various domains, particularly in computer vision~\cite{oord2019representation, chen20simclr,he20moco, henaff20cpcv2, Wu2018UnsupervisedFL}. 
This success is attributed to its ability to learn meaningful embeddings by contrasting similar and dissimilar data samples.
 In visual reinforcement learning, it's used as a self-supervised auxiliary task to improve state representation learning, with InfoNCE~\cite{oord2019representation} being a popular learning objective. 
In CURL~\cite{laskin20curl}, it treats augmented states as positive pairs, but it neglects the temporal dependency of MDP.
CPC~\cite{oord2019representation}, ST-DIM~\cite{ankesh2019stdim}, and ATC~\cite{adam21atc} integrate temporal relationships into the contrastive loss by maximizing mutual information between current state representations (or state histories encoded by LSTM in CPC) and future state representations. 
However, they do not consider actions, making positive relationships in the learning objective policy-dependent. 
DRIML~\cite{bogdan20driml} addresses this by maximizing mutual information between state-action pairs at the current time step and the resulting future state, but its objective remains policy-dependent as it only provides the first action of the action sequence. 
Besides, ADAT~\cite{kim2022action} and ACO~\cite{zhang2022contrastive} incorporate actions into contrastive loss by labeling observations with similar policy action outputs as positive samples, but these methods do not naturally extend to tasks with non-trivial continuous action spaces. 
A common downside of these approaches is the potential for unstable encoder updates due to policy-dependent positive relations.
In contrast, \ours is theoretically sufficient, and it tackles the additional challenge of continuous control tasks by simultaneously learning state and action representations. 

In addition to the InfoNCE objective, other self-supervised learning objective is also proposed. 
Approahces such as DeepMDP~\cite{gelada2019deepmdp}, SPR~\cite{schwarzer2021dataefficient}, SGI~\cite{schwarzer2021pretraining}, and EfficientZero~\cite{ye2021efficientzero} direct learn a latent-space transition model.
Notably, these methods predominantly target Atari games characterized by their small, well-represented, and abstract discrete action spaces.
When dealing with continuous control tasks, which often involve a continuous and potentially high-dimensional action space, the relationships between actions and states become increasingly intricate. 
This complexity poses a significant challenge in effectively capturing the underlying dynamics. 
In contrast, by framing the latent dynamics model predictions as a self-supervised InfoNCE objective, the mutual information guided approach used by \ours is better suited for continuous control task, resulting in more stable optimization and thus better state and action representations.
\vspace{-0.5em}
\subsection{Action representation in reinforcement learning}
Although state or observation representations are the main focus of prior research, there also exists discussion on the benefits and effects of learning action representations. Chandak et al.~\citep{chandak2019learning} propose to learn a policy over latent action space and transform the latent actions into actual actions, which enables generalization over large action sets. Allshire et al.~\citep{allshire2021laser} introduce a variational encoder-decoder model to learn disentangled action representation, improving the sample efficiency of policy learning. 
In model-based RL, strategies to achieve more precise and stable model-based planning or roll-out are essential. To this end, Park et al.~\cite{park2023predictable} propose an approach to train an environment model in the learned latent action space.
In addition, action representation also has the potential to improve multi-task learning~\citep{hua2022simple}, where latent actions can be shared and enhance generalization. 
\section{Conclusion}
In this paper, we have introduced a conceptually simple temporal action-driven contrastive learning objective that simultaneously learns the state and action representations for image-based continuous control --- \ours. 
Theoretically sound, \ours has demonstrated significant practical superiority by outperforming SOTA online visual RL algorithms. 
Additionally, it can be seamlessly integrated as a plug-in module to enhance the performance of existing offline RL algorithms.
Despite the promising results, \ours does present limitations, particularly its need for large batch sizes due to the inherent nature of the contrastive InfoNCE objective, which impacts computational efficiency.
Moving forward, we envisage two primary directions for future research. 
Firstly, the creation of more advanced temporal contrastive InfoNCE objectives that can function effectively with smaller data batches may mitigate the concerns related to computational efficiency. 
Secondly, the implementation of a distributed version of \ours, akin to the strategies employed for DDPG in previous works~\cite{barth18d4pg, hoffman2020acme}, could significantly enhance training speed. 
These approaches offer promising avenues for further advancements in visual RL. \looseness=-1
\section{Acknowledgement}
Zheng, Wang, Sun and Huang are supported by National Science
Foundation NSF-IIS-FAI program, DOD-ONR-Office of Naval Research, DOD Air Force Office of Scientific Research, DOD-DARPA-Defense Advanced Research Projects Agency Guaranteeing AI Robustness against Deception (GARD), Adobe, Capital One and JP Morgan faculty fellowships.


\bibliography{include/reference}
\bibliographystyle{plain}

\newpage
\appendix
\onecolumn
{\begin{center} \bf \LARGE
    Appendix
    \end{center}
}
\section{Effects of the choice of prediction horizon $K$}
\label{a2:timestep}
\ours functions as a versatile add-on to current online and offline RL algorithms, requiring only the prediction horizon $K$ as an additional hyperparameter. In our investigations, we select $K$ as either 1 or 3 across all tasks. The performance of \ours with $K$ values of 1 and 3 is compared against the DrQ-v2 baselines in Figure \ref{fig:step_size}.
As illustrated in the figure, \ours outperforms the baseline DrQ-v2 for both $K=1$ and $3$, indicating the effectiveness of the temporal contrastive loss in concurrent state and action representation learning.
In comparing $K=1$ to $K=3$, we observe that a longer prediction horizon ($K=3$) yields superior results for four out of nine tasks, specifically Hopper Hop, Quadruped Walk, Acrobot Swingup, and Reacher Hard. 
Conversely, for the Quadruped Run and Walker Run tasks, a shorter temporal contrastive loss interval ($K=1$) proves more beneficial. For the remaining tasks, the choice between $K=1$ and $K=3$ appears to have no discernable impact.

\begin{figure}[!htbp]
    \centering
    \includegraphics[scale=0.18]{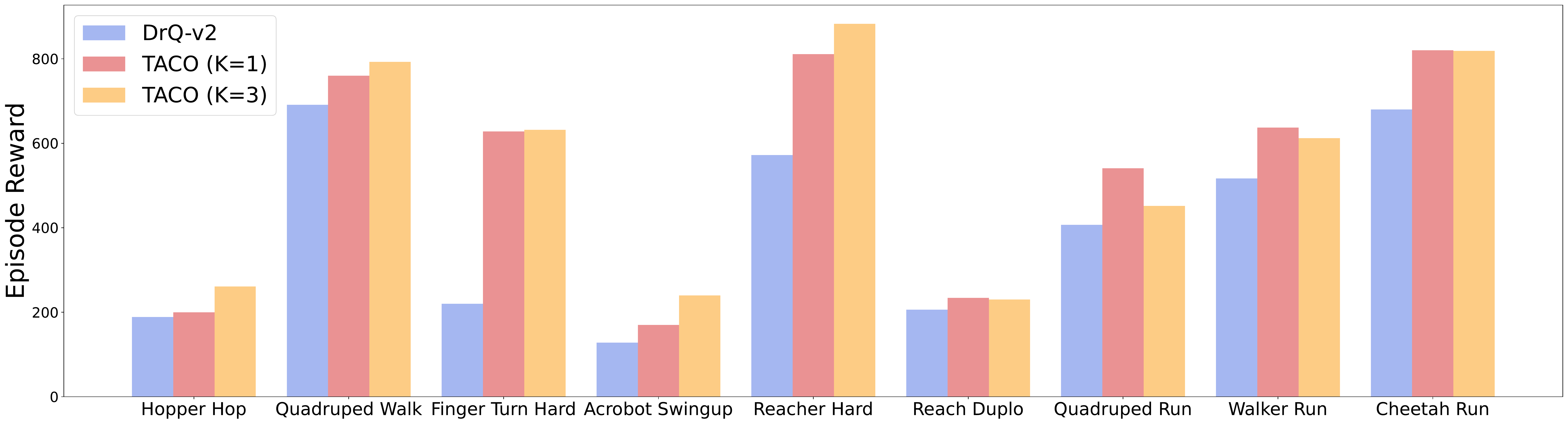} 
    \caption{1M Performance of \ours with step size $K=1$ and $K=3$. }
    \label{fig:step_size}
    \vspace{-0.5em} 
\end{figure}

Our speculation hinges on the rate of changes in the agent's observations.
While Theorem~\ref{thm1} applies regardless of the prediction horizon $K$, a shorter horizon (such as $K=1$) can offer a somewhat shortsighted perspective.  
This becomes less beneficial in continuous control tasks where state transitions within brief time intervals are negligible. 
Therefore, in environments exhibiting abrupt state transitions, a larger $K$ would be advantageous, whereas a smaller $K$ would suffice for environments with gradual state transitions.

\begin{wrapfigure}{r}{0.35\columnwidth}
\vspace{-1.5em}
\centering
 \includegraphics[scale=0.15]{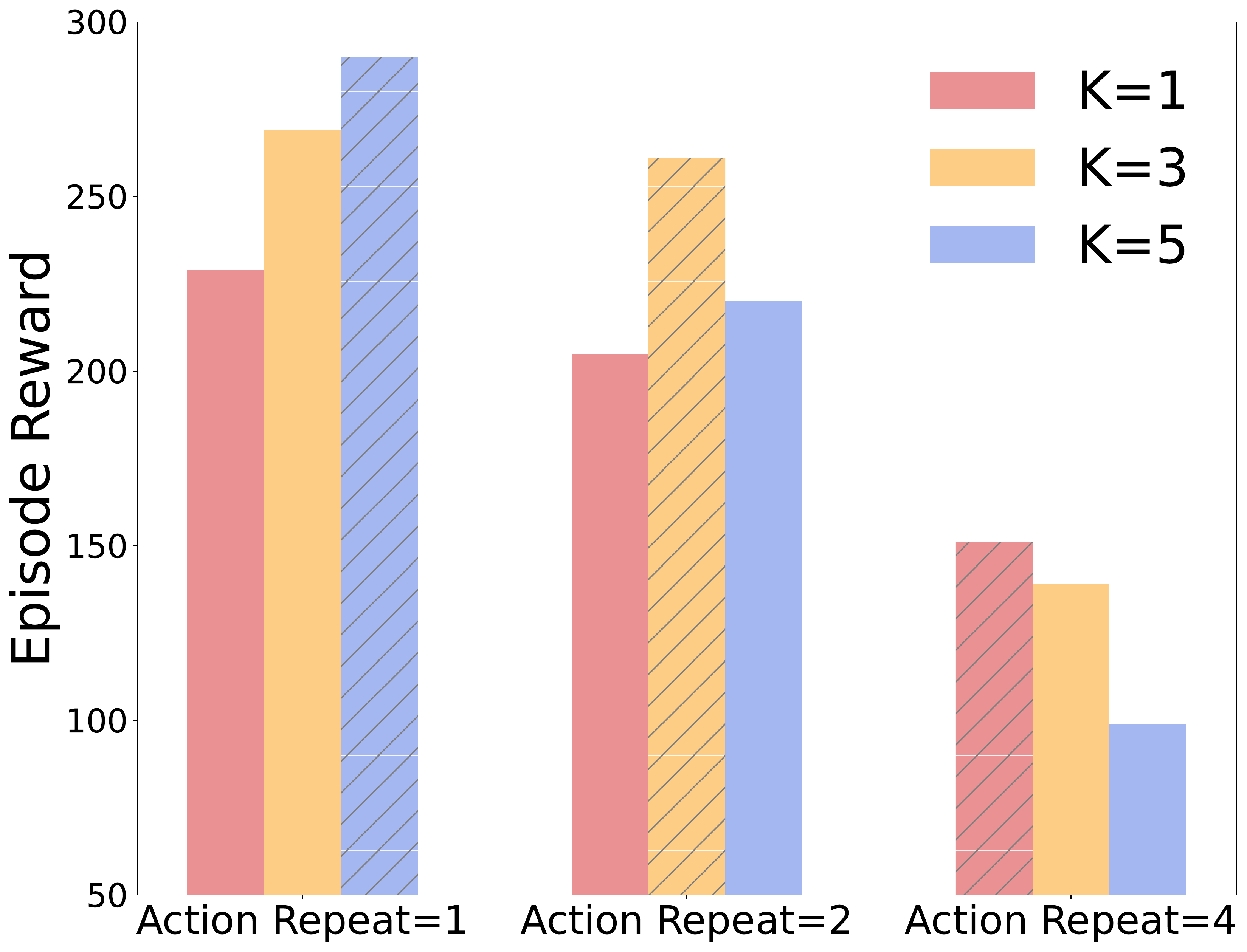} 
 \caption{1M Performance of \ours with different prediction horizon $K$ under different action repeat sizes. 
Shaded columns correspond to the best prediction horizon $K$ under a fixed action repeat size.}
 \label{fig:action_repeat}
\end{wrapfigure}

To substantiate this conjecture, we conduct a simple experiment on the task Hopper Hop where we aim to simulate the different rates of changes in the agent's observations.
We verify this by varying the size of action repeats, which correspond to how many times a chosen action repeats per environmental step. 
Consequently, a larger action repeat size induces more pronounced observational changes. In our prior experiments, following the settings of DrQ-v2, we fix the size of the action repeat to be 2. 
In this experiment, we alter the size of the action repeat for Hopper Hop to be 1, 2, and 4. 
For each action repeat size, we then compare the 1M performance of \ours with the prediction horizon $K$ selected from the set $\{1,3,5\}$. 
Interestingly, as demonstrated in \autoref{fig:action_repeat}, the optimal $K$ values for different action repeats are reversed: 5 for an action repeat of 1, 3 for an action repeat of 2, and 1 for an action repeat of 4.
This observation further substantiates our assertion that the optimal choice of prediction horizon correlates with the rate of change in environmental dynamics.

\section{Time-efficiency of \ours}
In this section, we compare the time efficiency of different visual RL algorithms. 
In Table~\ref{tab:time_efficiency}, we present a comprehensive comparison of the speed of these algorithms, measured in frames per second (\textbf{FPS}). Additionally, we also provide \textbf{FPS} for \ours with different batch sizes in \autoref{tab:time_efficiency_taco} as a reference.
To ensure a fair comparison, all the algorithms are tested on Nvidia A100 GPUs.

As mentioned in \autoref{s5s1:online_rl}, the InfoNCE objective in \ours requires a large batch size of 1024, which is 4 times larger than the batch size used in DrQ-v2. 
Consequently, this increases the processing time, making our method to run about 3.6 times slower than DrQ-v2, with similar time efficiency as DrQ. 
Therefore, the primary limitation of our method is this time inefficiency caused by the usage of large batch sizes.
Potential solutions to improve the time efficiency could involve the implementation of a distributed version of our method to speed up training. 
We intend to explore these improvements in future work related to our paper.

\label{a3:time_efficiency}
\begin{table}[h!]
\centering
\vspace{-1.0em}
 \caption{Frames per second (\textbf{FPS}) for visual RL algorithms. \textbf{B} stands for batch size used in their original implementations.}
 \label{tab:time_efficiency}
\resizebox{1.0\textwidth}{!}{%
\begin{tabular}{@{} c|c|c|c|c|c|c@{}}
\toprule
 \textbf{\ours} (\textbf{B}:1024) & \textbf{DrQv2} (\textbf{B}:256) & \textbf{A-LIX} (\textbf{B}:256) & \textbf{DrQ} (\textbf{B}:512) & \textbf{CURL} (\textbf{B}:512) & \textbf{Dreamer-v3} (\textbf{B}:256) & \textbf{TDMPC} (\textbf{B}:256) \\
 \midrule
 35 & 130 & 98 & 33 & 20 & 31 & 22 \\
\bottomrule
\end{tabular}%
 }
 \vspace{-1.5em}
 \end{table}

 \begin{table}[h!]
\centering
\vspace{-0.1em}
 \caption{Frames per second (\textbf{FPS}) for \ours with different batch size}
 \label{tab:time_efficiency_taco}
\resizebox{0.4\textwidth}{!}{%
\begin{tabular}{@{} c|c|c@{}}
\toprule
 \textbf{\ours} (\textbf{B}:1024) & \textbf{\ours} (\textbf{B}:512) & \textbf{\ours} (\textbf{B}:256) \\
 \midrule
 35 & 65 & 94 \\
\bottomrule
\end{tabular}%
 }
 \vspace{-1.5em}
 \end{table}
\section{Experiment details}
\label{a4:implementation}

\subsection{Online RL}
In this section, we describe the implementation details of \ours for the online RL experiments. 
We implement \ours on top of the released open-source implementation of DrQ-v2, where we interleave the update of \ours objective with the original actor and critic update of DrQ-v2. Below we demonstrate the pseudo-code of the new update function.

\begin{lstlisting}[language=Python, caption=Pytorch-like pseudo-code how \ours is incorporated into the update function of existing visual RL algorithms.]
### Extract feature representation for state and actions.
def update(batch):
    obs, action_sequence, reward, next_obs = batch
    ### Update Agent's critic function
    update_critic(obs, action_sequence, reward, next_obs)
    ### Update the agent's value function
    update_actor(obs)
    ### Update TACO loss
    update_taco(obs, action_sequence, reward, next_obs)
\end{lstlisting}

Next, we demonstrate the pseudo-code of how \ours objective is computed with pytorch-like pseudocode.

\begin{lstlisting}[language=Python, caption=Pytorch-like pseudo-code for how \ours objective is computed]
# state_encoder:    State/Observation Encoder (CNN)
# action_encoder:   Action Encoder (MLP with 1-hidden layer)
# sequence_encoder: Action Sequence encoder (Linear Layer)
# reward_predictor: Reward Prediction Layer (MLP with 1-hidden layer)
# G:                Projection Layer I (MLP with 1-hidden layer)
# H:                Projection Layer II (MLP with 1-hidden layer)
# aug:              Data Augmentation Function (Radnom Shift)
# W:                Matrix for computing similarity score

def compute_taco_objective(obs, action_sequence, reward, next_obs):
    ### Compute feature representation for both state and actions.
    z        = state_encoder(aug(obs))
    z_anchor = state_encoder(aug(obs), stop_grad=True)
    next_z = state_encoder(aug(next_obs), stop_grad=True)
    u_seq  = sequence_encoder(action_encoder(action_sequence))
    ### Project to joint contrastive embedding space
    x  = G(torch.cat([z, u_seq], dim=-1))
    y  = H(next_z)    
    ### Compute bilinear product x^TWy 
    ### Diagonal entries of x^TWy correspond to positive pairs
    logits = torch.matmul(x, torch.matmul(W, y.T))  
    logits = logits - torch.max(logits, 1) 
    labels = torch.arange(n)
    taco_loss = cross_entropy_loss(logits, labels)

    ### Compute CURL loss
    x = H(z)
    y = H(z_anchor).detach()
    logits = torch.matmul(x, torch.matmul(W, y.T))  
    logits = logits - torch.max(logits, 1) 
    labels = torch.arange(n)
    curl_loss = cross_entropy_loss(logits, labels)
    
    ### Reward Prediction Loss
    reward_pred = reward_predictor(z, u_seq)
    reward_loss = torch.mse_loss(reward_pred, reward)
    
    return taco_loss + curl_loss + reward_loss
\end{lstlisting}

Then when we compute the Q values for both actor and critic updates, we use the trained state and action encoder. Same as what is used in DrQ-v2, we use 1024 for the hidden dimension of all the encoder layers and 50 for the feature dimension of state representation.
For action representation, we choose the dimensionality of the action encoding, which corresponds to the output size of the action encoding layer, as $\lceil 1.25\times|\mathcal{A}|\rceil$.
In practice, we find this works well as it effectively extracts relevant control information from the raw action space while minimizing the inclusion of irrelevant control information in the representation.
See \autoref{a5:action_representation_dim} for an additional experiment on testing the robustness of \ours against the dimensionality of action representation.

\subsection{Offline RL}
TD3+BC, CQL, IQL all are originally proposed for vector-input. 
We modify these algorithms on top of DrQ-v2 so that they are able to deal with image observations.
For TD3+BC, the behavior cloning regularizer is incorporated into the actor function update, with the regularizer weight $\alpha_\text{TD3+BC}=2.5$ as defined and used in Scott et al.~\cite{scott21td3bc}. 
In our experiments, no significant performance difference was found for $\alpha$ within the $[2,3]$ range.
In the case of CQL, we augment the original critic loss with a Q-value regularizer and choose the Q-regularizer weight, $\alpha_\text{CQL}$, from $\{0.5, 1, 2, 4,8\}$. \autoref{tab:cql_hyper} presents the chosen $\alpha_\text{CQL}$ for each dataset.
\begin{table}[!htbp]
\vspace{-1.0em}
\centering
\captionsetup{justification=centering}
 \caption{Hyperparameter of $\alpha_\text{CQL}$ used in different tasks/datasets.}
 \label{tab:cql_hyper}
\resizebox{0.4\textwidth}{!}{%
\begin{tabular}{@{} c c c @{}}
\toprule
 \multicolumn{2}{c}{$\textbf{Task/ Dataset}$} & $\alpha_\text{CQL}$ \\
\midrule
\multirow{2}{*}{Hopper Hop}     & Medium & $0.5$\\
                                & Medium-replay & $0.5$\\
                                & Replay & $2$\\
\midrule
\multirow{2}{*}{Cheetah Run}    & Medium & $0.5$\\
                                & Medium-replay & $2$\\
                                & Replay & $4$\\
\midrule
\multirow{2}{*}{Walker Run}     & Medium & $0.5$\\
                                & Medium-replay & $1$\\
                                & Replay & $4$\\
\midrule
\multirow{2}{*}{Quadruped Run}  & Medium & $0.5$\\
                                & Medium-replay & $2$\\
                                & Replay & $4$\\
\bottomrule
\end{tabular}%
 }
 \vspace{-1.0em}
 \end{table}

For IQL, we adopt the update functions of both policy and value function from its open-source JAX implementation into DrQ-v2, setting the inverse temperature $\beta$ to 0.1, and $\tau=0.7$ for the expectile.
Lastly, for the Decision Transformer (DT), we adapt from the original open-source implementation by Chen et al.~\cite{chen2021dt} and use a context length of 20.
\newpage
\section{Sensitivity of \ours to action representation dimensionality: an additional experiment}
\label{a5:action_representation_dim}
\begin{wrapfigure}{r}{0.35\columnwidth}
\vspace{-1.5em}
\centering
 \includegraphics[scale=0.2]{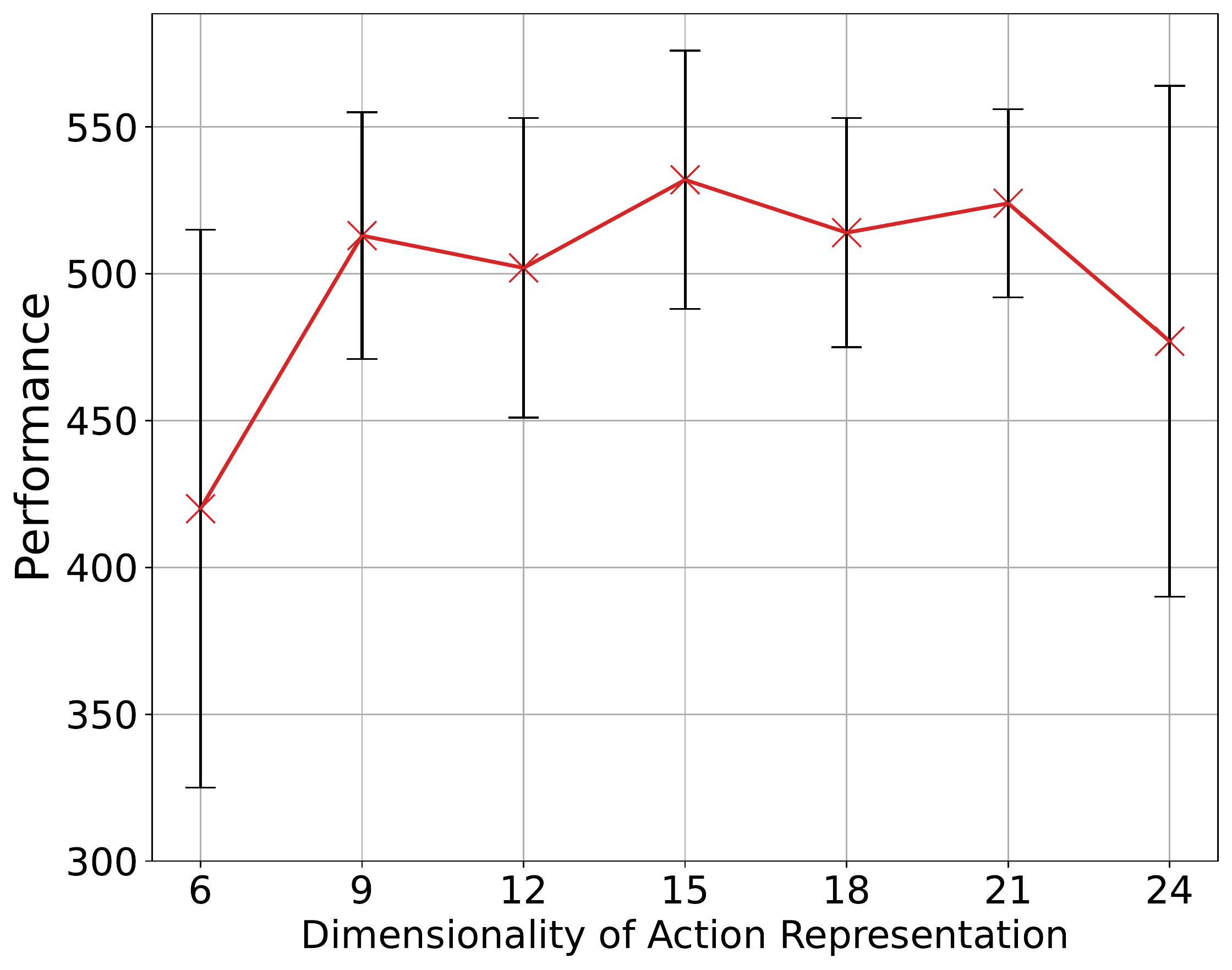} 
 \caption{1M Performance of \ours with different dimensionality of latent action representations on Quadruped Run ($|\mathcal{A}|=12$). Error bar represents standard deviation across 8 random seeds.}
 \label{fig:action_representation_dim}
\end{wrapfigure}
In all of our previous experiments, we use $\lceil 1.25\times|\mathcal{A}|\rceil$ as the latent dimensions of the action space for \ours. 
In practice, we find this works well so that it retains the rich information of the raw actions while being able to group semantically similar actions together. 
To test the algorithm's sensitivity to this hyperparameter, we conduct an experiment on the Quadruped Run task, which has a 12-dimensional action space.
In \autoref{fig:action_representation_dim}, we show the 1M performance of \ours with different choices of latent action dimensions. 
Notably, we observe that as long as the dimensionality of the action space is neither too small (6-dimensional), which could limit the ability of latent actions to capture sufficient information from the raw actions, nor too large (24-dimensional), which might introduce excessive control-irrelevant information, the performance of \ours remains robust to the choice of action representation dimensionality. \looseness=-1
\section{\mbox{Sensitivity of \ours vs. DrQ-v2 to random seeds}}
\begin{figure}[!htbp]
    \centering
    \begin{subfigure}[b]{0.31\textwidth}
        \includegraphics[scale=0.3]{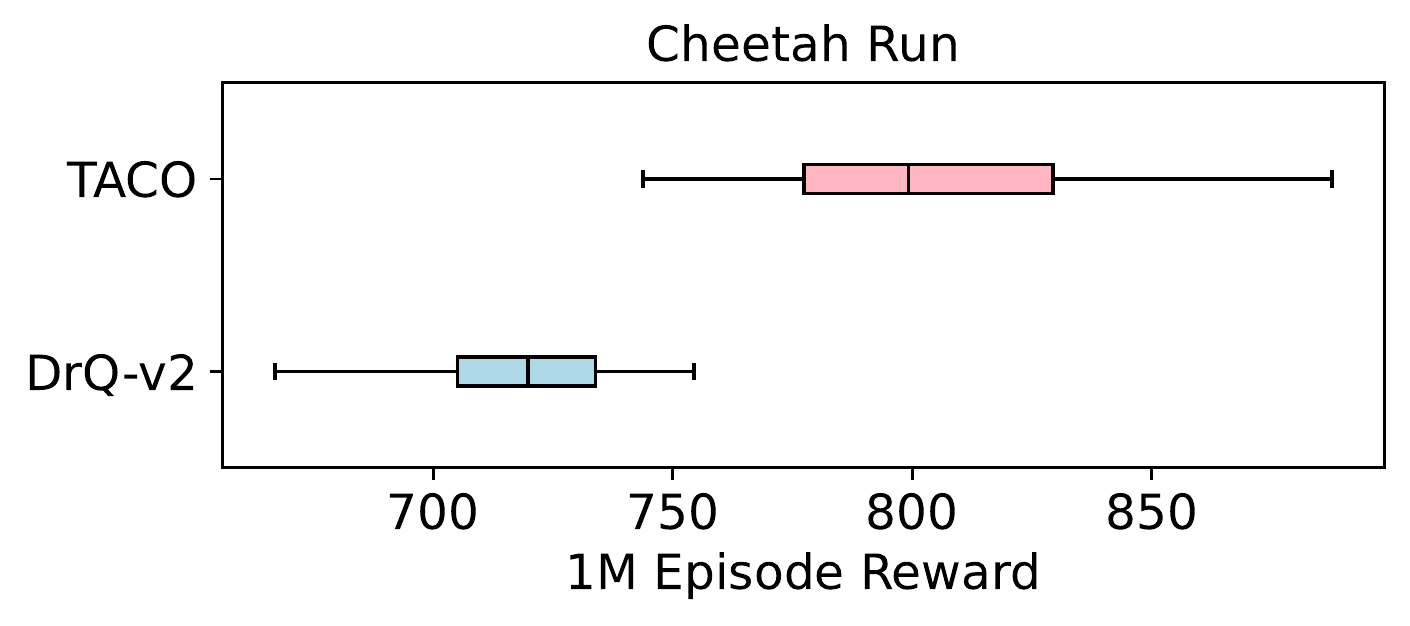}
        \caption{Cheetah Run}
    \end{subfigure}
    \quad
    \begin{subfigure}[b]{0.31\textwidth}
        \includegraphics[width=\textwidth]{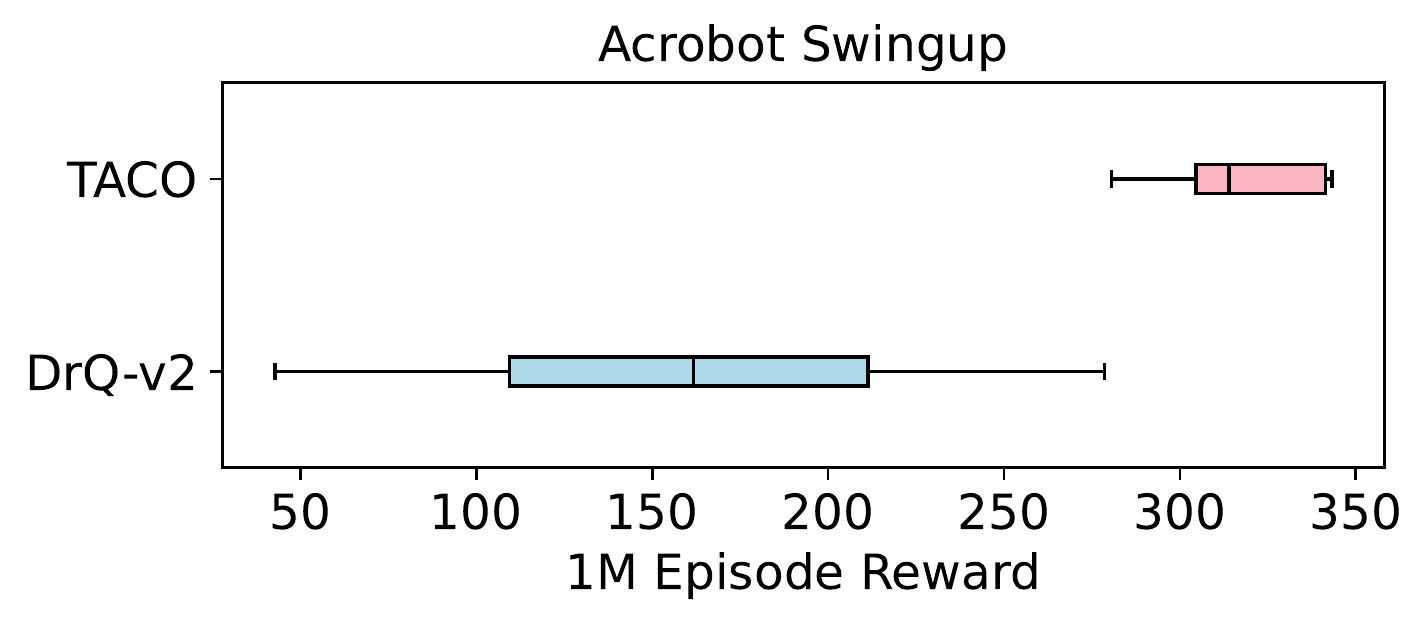}
        \caption{Acrobot Swingup}
    \end{subfigure}
    \quad
    \begin{subfigure}[b]{0.31\textwidth}
        \includegraphics[width=\textwidth]{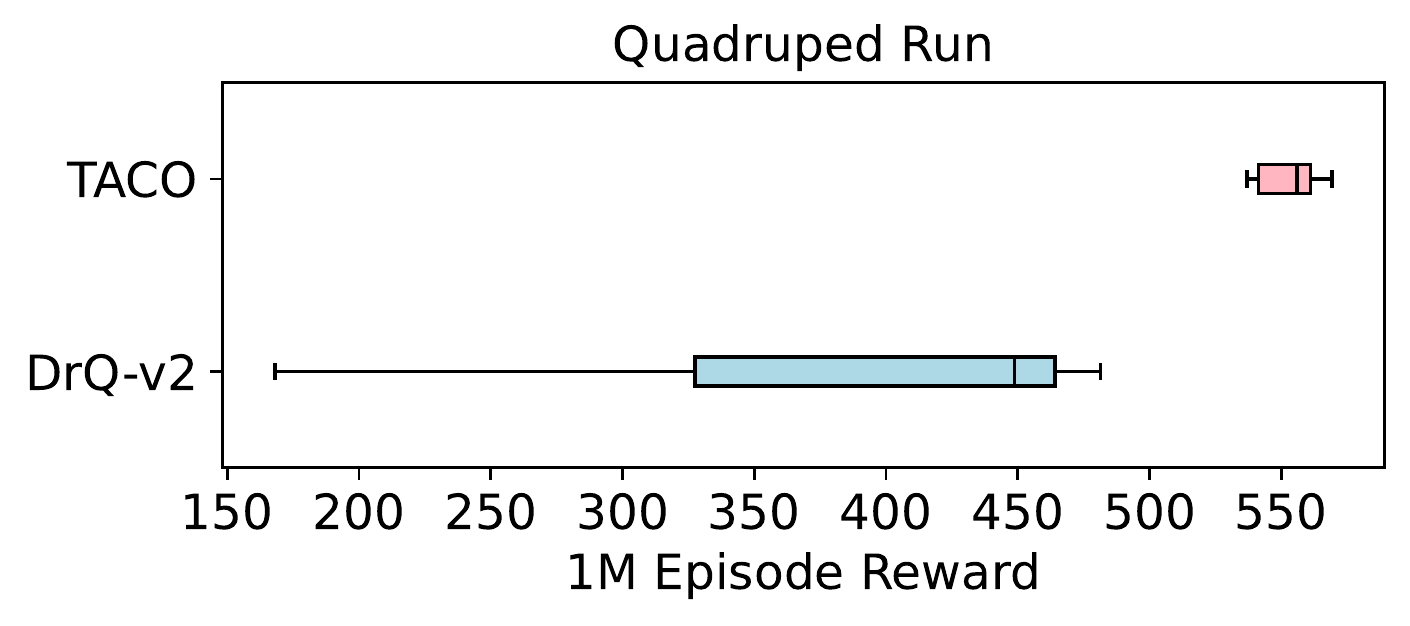}
        \caption{Quadruped Run}
    \end{subfigure}
    
    \begin{subfigure}[b]{0.31\textwidth}
        \includegraphics[width=\textwidth]{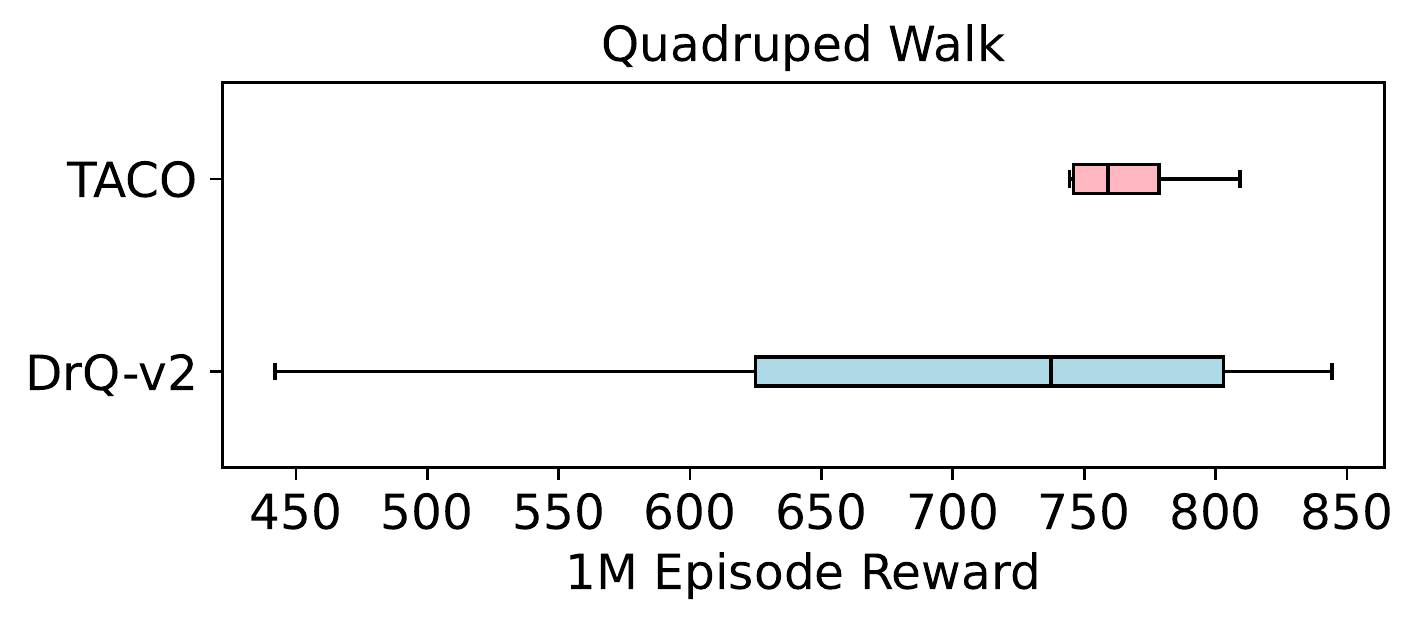}
        \caption{Quadruped Walk}
    \end{subfigure}
    \quad
    \begin{subfigure}[b]{0.31\textwidth}
        \includegraphics[width=\textwidth]{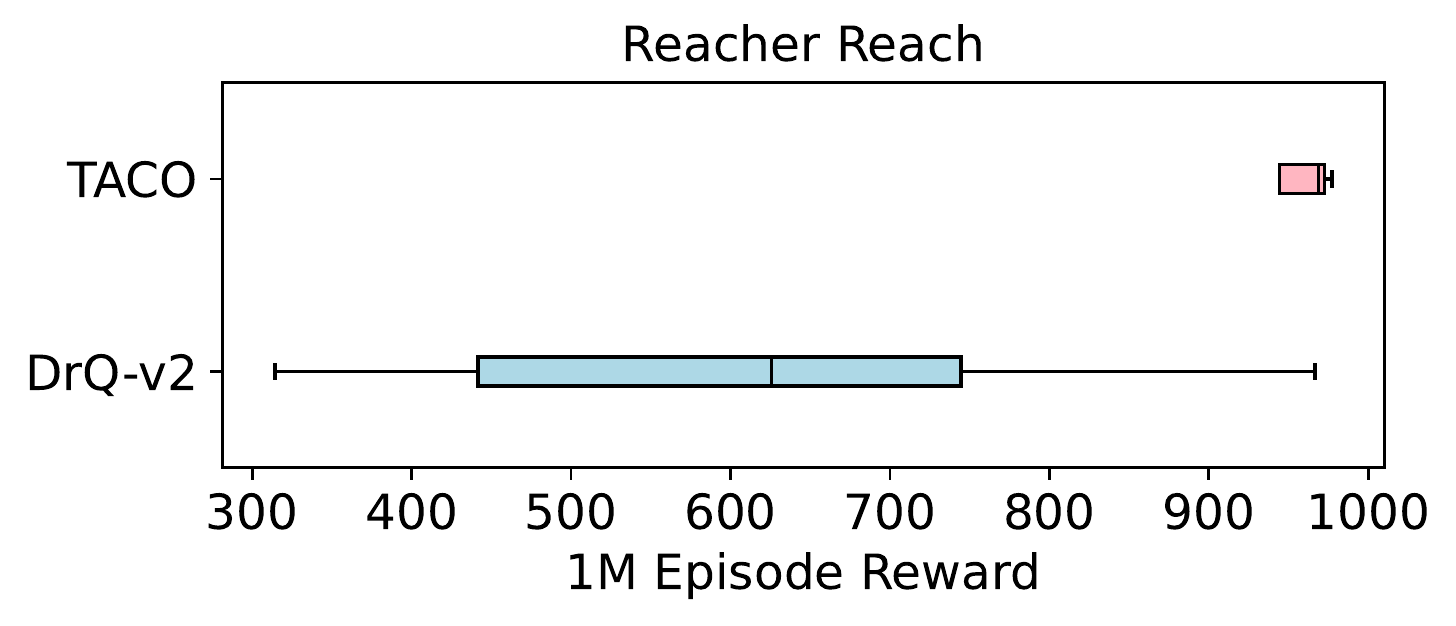}
        \caption{Reacher Hard}
    \end{subfigure}
    \quad
    \begin{subfigure}[b]{0.31\textwidth}
        \includegraphics[width=\textwidth]{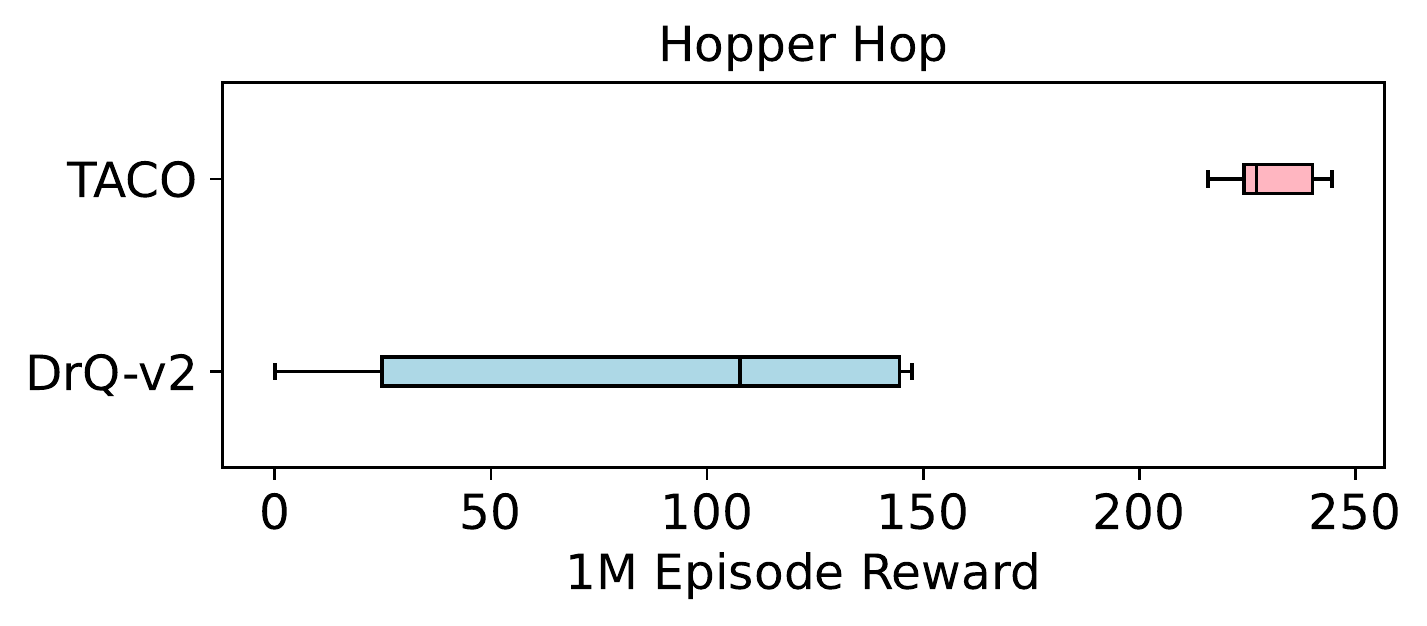}
        \caption{Hopper Hop}
    \end{subfigure}
    
    \begin{subfigure}[b]{0.31\textwidth}
        \includegraphics[width=\textwidth]{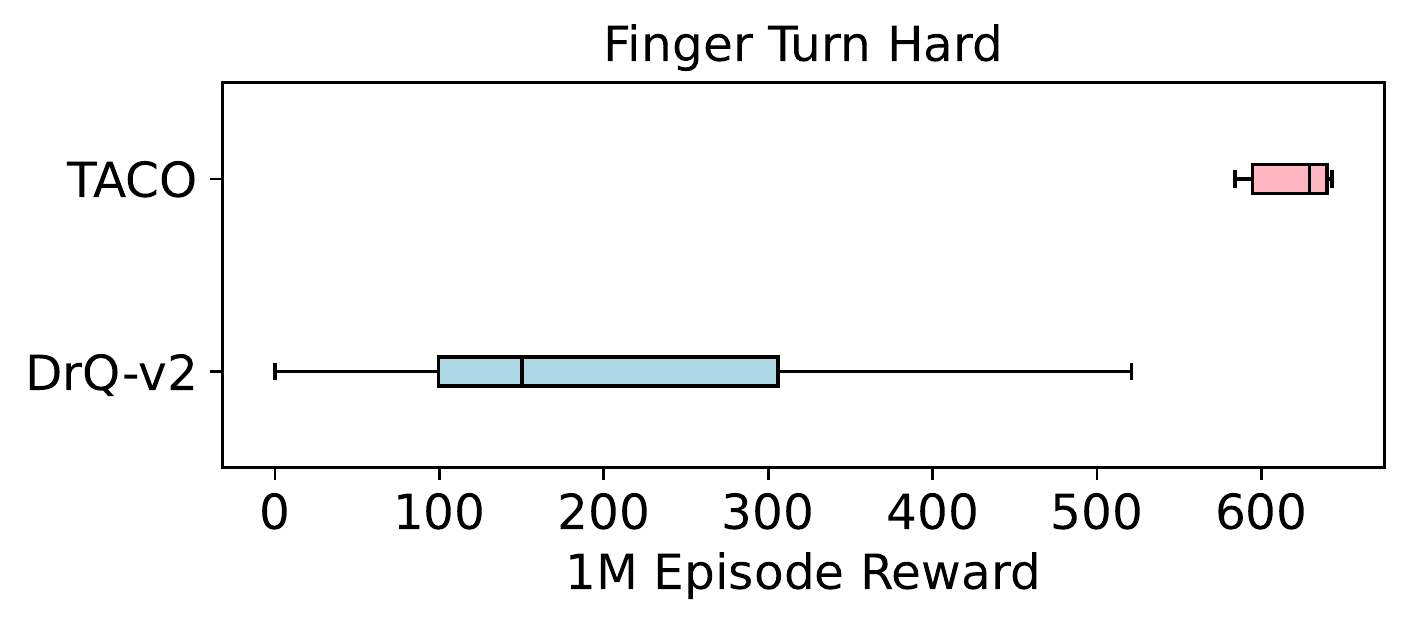}
        \caption{Finger Turn Hard}
    \end{subfigure}
    \quad
    \begin{subfigure}[b]{0.31\textwidth}
        \includegraphics[width=\textwidth]{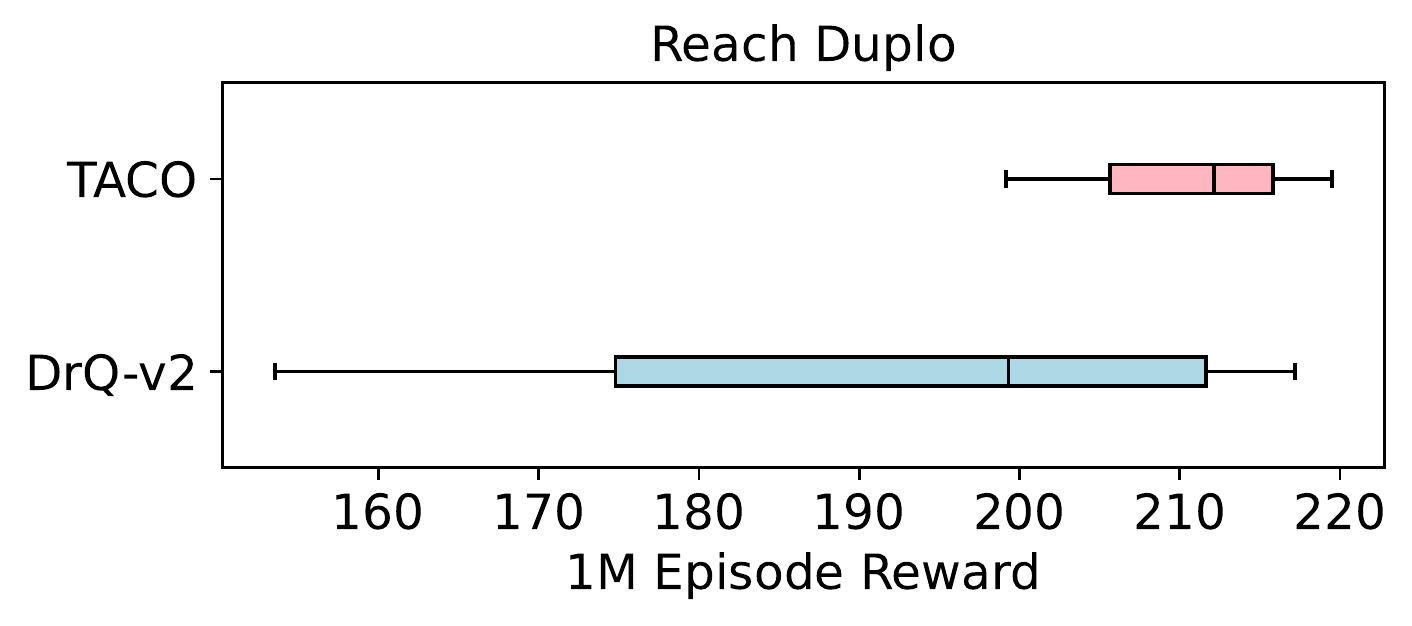}
        \caption{Reach Duplo}
    \end{subfigure}
    \quad
    \begin{subfigure}[b]{0.31\textwidth}
        \includegraphics[width=\textwidth]{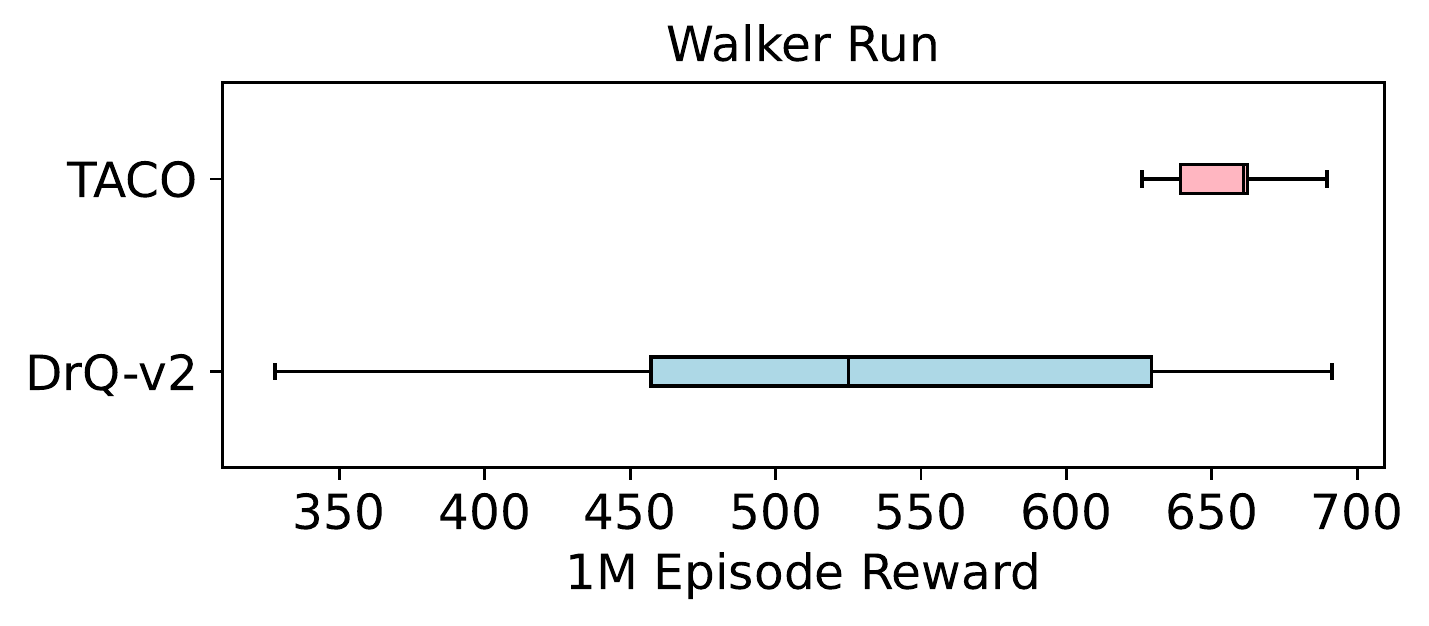}
        \caption{Walker Run}
    \end{subfigure}    
    \caption{Robustness analysis of \ours vs. DrQ-v2 across multiple (6) random seeds}
    \label{fig:robustness_analysis}
\end{figure}

In this section, we conduct a comprehensive analysis of the algorithm's robustness under random seed, comparing the performance of our proposed method, \ours, with the baseline algorithm, DrQ-v2, across multiple seeds on nine distinct tasks shown in \autoref{fig:online_rl}. 
Our aim is to gain insights into the stability and robustness of both algorithms under varying random seeds, providing valuable information to assess their reliability.

Remarkably, \autoref{fig:robustness_analysis} demonstrates the impressive robustness of \ours compared to DrQ-v2. 
\ours consistently demonstrates smaller performance variations across different seeds, outperforming the baseline on almost every task. 
In contrast, runs of DrQ-v2 frequently encounter broken seeds. 
These results provide strong evidence that \ours not only improves average performance but also significantly enhances the robustness of the training process, making it more resilient against failure cases.
This enhanced robustness is crucial for real-world applications, where stability and consistency are essential for successful deployment.
\section{Tackling the complex ``Manipulator Bring Ball" task: an additional online RL experiment within the DeepMind Control Suite}
In addition to the nine tasks depicted in Figure \autoref{fig:online_rl}, we conduct an additional experiment on an even more complex task "Manipulator Bring Ball". This task necessitates the robotic arm to precisely locate, grab, and transport the ball to a specified goal location. A visualization of a successful trajectory is illustrated in \autoref{sfig:manipulator_vis}. 

The challenge of this domain lies in the sparse reward structure, the demand for accurate arm control, and the need for temporally extended skills. As a result, despite training for 30 million frames, as demonstrated in \autoref{sfig:manipulator_online_rl}, DrQ-v2 fails to successfully complete this task. 
In comparison, \ours manages to successfully solve this task for half of the random initializations, i.e., 3 out of 6 seeds.
\begin{figure}[htbp!]
\begin{subfigure}[h!]{1.0\textwidth}
\hspace{0.1em}
 \centering
  \includegraphics[scale=0.17]{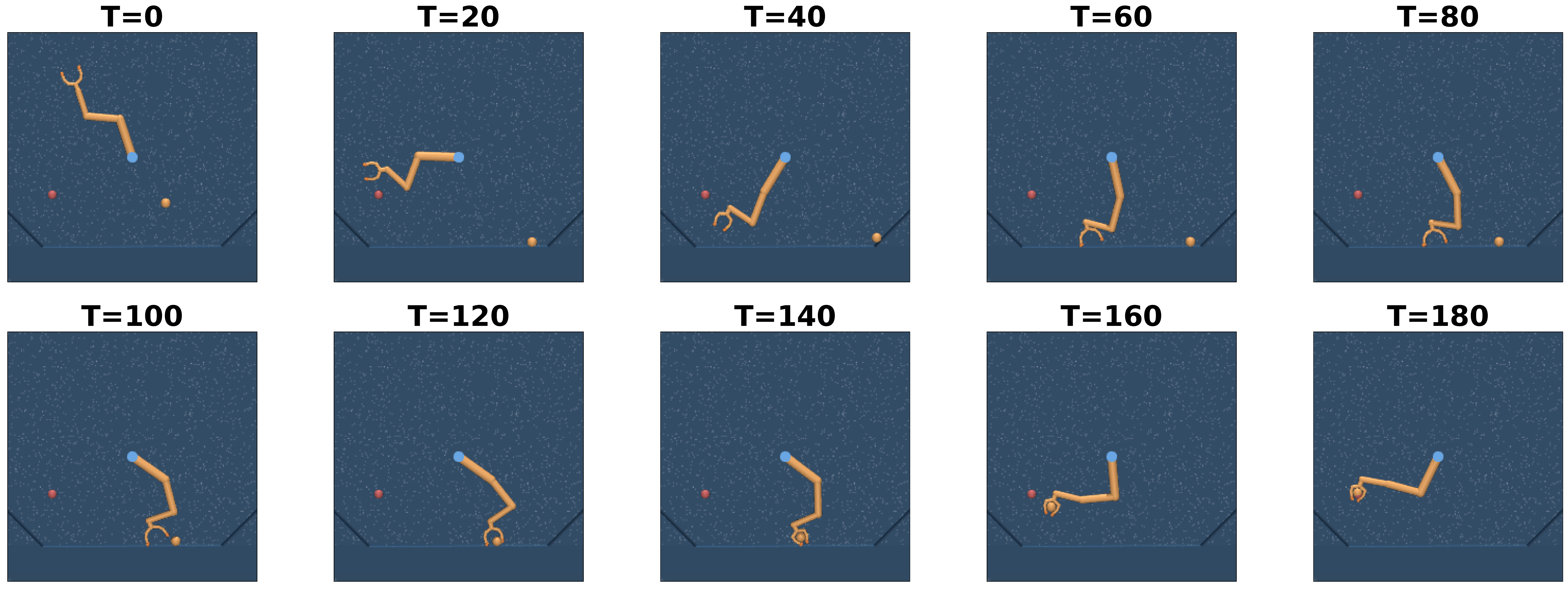}
 \caption{Visualization of an expert trajectory on manipulator bring ball.}
\label{sfig:manipulator_vis}
\end{subfigure} \hfill
\begin{subfigure}[h!]{1.0\textwidth}
\hspace{0.1em}
 \centering
  \includegraphics[scale=0.4]{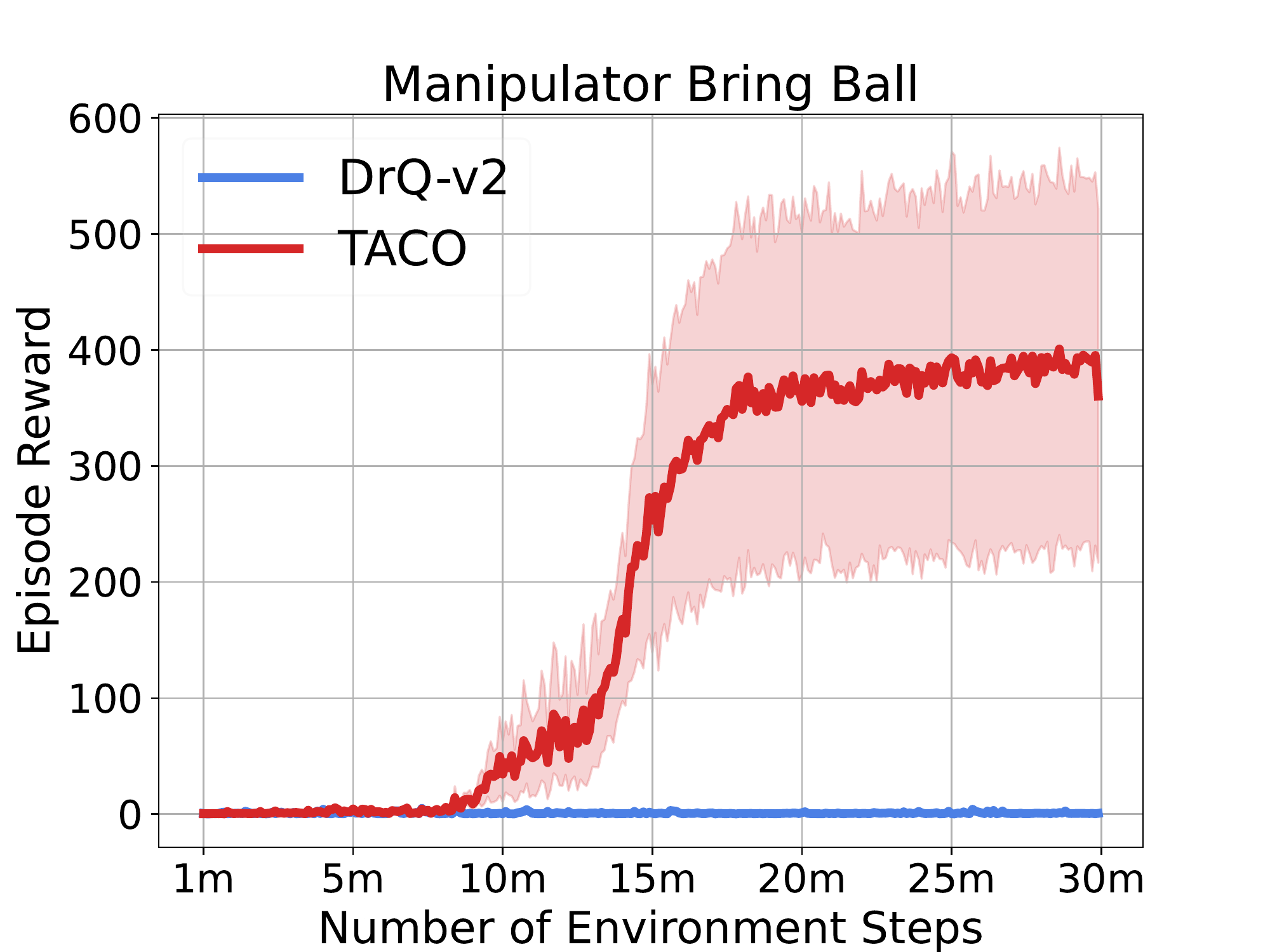}
 \caption{Online learning curves on manipulator bring ball aggregated over 6 random seeds.}
\label{sfig:manipulator_online_rl}
\end{subfigure} \hfill
\caption{Manipulator Bring Ball Task}
\end{figure}

\newpage
\section{Visualization of Meta-world Tasks}
\label{app7:metaworld}
\begin{figure}[thbp!]
 \centering
  \includegraphics[scale=0.23]{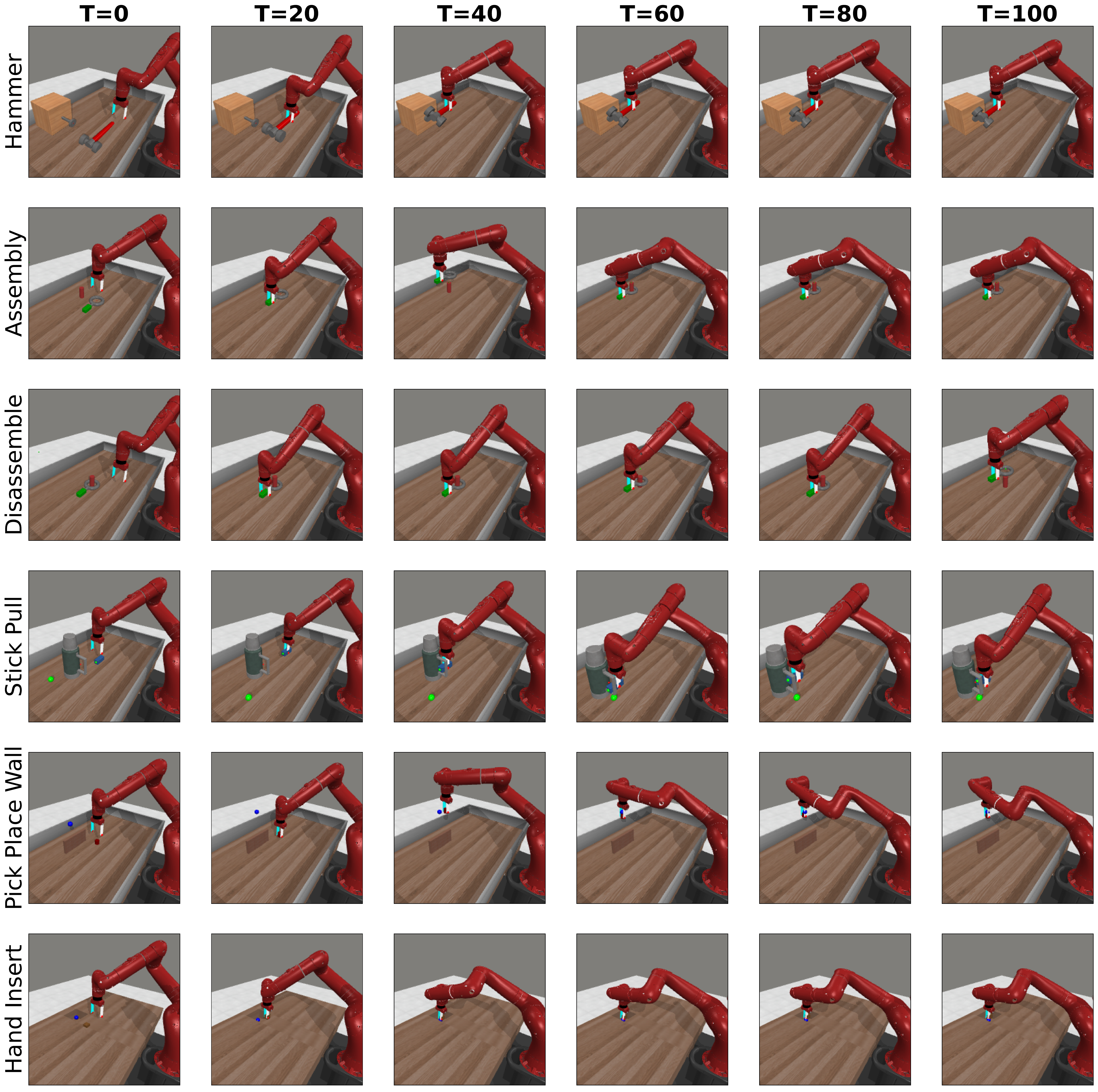}
\caption{Visualization of expert trajectories for each Meta-world task}
\label{fig:task_vis_metaworld}
\end{figure}

\newpage
\section{Proof of \cref{thm1}}
\label{a1:proof}

In this section, we prove \cref{thm1}, extending the results of Rakely et al.~\cite{rakelly2021which}. We will use notation $A_{t:t+K}$ to denote the action sequence $A_t,..., A_{t+K}$ from timestep $t$ to $t+K$, and we will use $U_{t:t+K}$ to denote the sequence of latent actions $U_t,..., U_{t+K}$ from timestep $t$ to $t+K$.
\begin{figure}[h!]
\vspace{-0.5em}
\centering
 \includegraphics[scale=0.3]{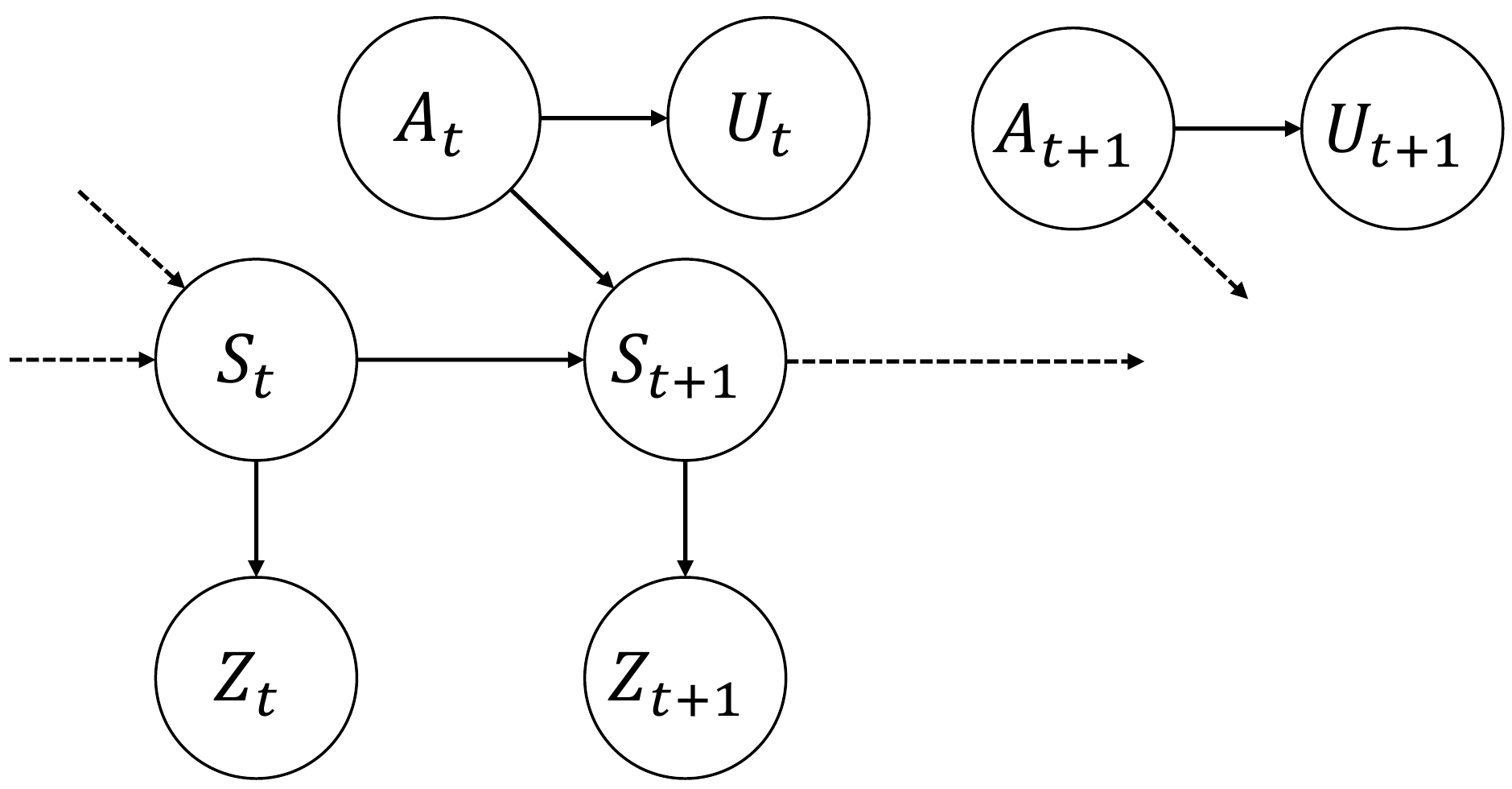} 
 \caption{The graphical diagram}
 \label{fig:proof_diagram}
\end{figure}

\begin{proposition}
Let $X$ be the return to go $\displaystyle \sum_{i=0}^{H-t-K} \gamma^{i}R_{t+K+i}$, with the conditional independence assumptions implied by the graphical model in \autoref{fig:proof_diagram}. If $I(Z_{t+K}; Z_t, U_{t:{t+K-1}})=I(S_{t+K};S_t, A_{t:{t+K-1}})$, then $I(X;  Z_t, U_{t:{t+K-1}})=I(X;S_t,  A_{t:{t+K-1}})$
\label{prop1}
\end{proposition}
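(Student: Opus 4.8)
The plan is to prove the two inequalities separately. The direction $I(X; Z_t, U_{t:t+K-1}) \le I(X; S_t, A_{t:t+K-1})$ is immediate from the data-processing inequality, since $(Z_t, U_{t:t+K-1}) = (\phi(S_t), \psi(A_t), \ldots, \psi(A_{t+K-1}))$ is a deterministic function of $(S_t, A_{t:t+K-1})$. The substance is the reverse inequality, and my strategy is to reduce it to a single conditional-independence statement: if I can establish $X \perp (S_t, A_{t:t+K-1}) \mid (Z_t, U_{t:t+K-1})$, then because the representation is a deterministic image of $(S_t, A_{t:t+K-1})$, the chain rule gives $I(X; S_t, A_{t:t+K-1}) = I(X; S_t, A_{t:t+K-1}, Z_t, U_{t:t+K-1}) = I(X; Z_t, U_{t:t+K-1})$ (the first equality from determinism, the second from the vanishing conditional information). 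Thus the entire argument funnels into proving that one conditional independence.

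To build it, I would first convert the hypothesis into conditional independences. Applying data-processing on both arguments gives the sandwich $I(Z_{t+K}; Z_t, U_{t:t+K-1}) \le I(Z_{t+K}; S_t, A_{t:t+K-1}) \le I(S_{t+K}; S_t, A_{t:t+K-1})$, so the assumed equality of the two ends forces both inequalities to be equalities. Applying the chain rule together with the determinism $Z_{t+K} = \phi(S_{t+K})$ to the equality $I(Z_{t+K}; S_t, A_{t:t+K-1}) = I(S_{t+K}; S_t, A_{t:t+K-1})$ yields $S_{t+K} \perp (S_t, A_{t:t+K-1}) \mid Z_{t+K}$; applying the same reasoning with the determinism of the representation to $I(Z_{t+K}; Z_t, U_{t:t+K-1}) = I(Z_{t+K}; S_t, A_{t:t+K-1})$ yields $Z_{t+K} \perp (S_t, A_{t:t+K-1}) \mid (Z_t, U_{t:t+K-1})$.

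Next I would chain these with the Markov property read off the graphical model in \autoref{fig:proof_diagram}, namely $X \perp (S_t, A_{t:t+K-1}) \mid S_{t+K}$ (the return-to-go depends on the past only through $S_{t+K}$). Combining this with $S_{t+K} \perp (S_t, A_{t:t+K-1}) \mid Z_{t+K}$ — by factoring $p(x, s_t, a_{t:t+K-1} \mid z_{t+K})$ through $s_{t+K}$ and using that $Z_{t+K}$ is a function of $S_{t+K}$ so that $p(x \mid s_t, a_{t:t+K-1}, s_{t+K}) = p(x \mid s_{t+K})$ — gives $X \perp (S_t, A_{t:t+K-1}) \mid Z_{t+K}$. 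A second, structurally identical factorization, now marginalizing $Z_{t+K}$ out and invoking $Z_{t+K} \perp (S_t, A_{t:t+K-1}) \mid (Z_t, U_{t:t+K-1})$, upgrades this to $X \perp (S_t, A_{t:t+K-1}) \mid (Z_t, U_{t:t+K-1})$, which closes the reduction.

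I expect the main obstacle to be the two conditional-independence compositions in the last step: these are not a single off-the-shelf graphoid axiom but a manipulation of the joint factorization that repeatedly exploits that $Z_{t+K}$ and $(Z_t, U_{t:t+K-1})$ are deterministic images of $S_{t+K}$ and $(S_t, A_{t:t+K-1})$, respectively. The care needed is to justify each replacement $p(x \mid s_t, a_{t:t+K-1}, s_{t+K}) = p(x \mid s_{t+K})$ and $p(x \mid z_{t+K}, z_t, u_{t:t+K-1}) = p(x \mid z_{t+K})$ from the independences already in hand, and to verify that the marginalization over the intermediate variable collapses exactly to the required product form rather than merely an inequality.
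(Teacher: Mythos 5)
Your proposal is correct, and it reaches the conclusion by a genuinely different route from the paper. The paper argues by contradiction at the level of mutual-information inequalities: it supposes $I(X;Z_t,U_{t:t+K-1})<I(X;S_t,A_{t:t+K-1})$, extracts $I(X;S_t\mid Z_t,U_{t:t+K-1})>0$ from two chain-rule expansions of $I(X;Z_t,S_t,U_{t:t+K-1})$, transfers this positivity to $I(S_{t+K};S_t\mid Z_t,U_{t:t+K-1})$ by a conditional data-processing step along the Markov chain $S_t\rightarrow S_{t+K}\rightarrow X$, and then contradicts the hypothesis $I(Z_{t+K};Z_t,U_{t:t+K-1})=I(S_{t+K};S_t,A_{t:t+K-1})$. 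You instead argue directly: the hypothesis, sandwiched by data processing, forces the two conditional independences $S_{t+K}\perp(S_t,A_{t:t+K-1})\mid Z_{t+K}$ and $Z_{t+K}\perp(S_t,A_{t:t+K-1})\mid(Z_t,U_{t:t+K-1})$, which you compose with the graphical-model fact $X\perp(S_t,A_{t:t+K-1})\mid S_{t+K}$ to obtain $X\perp(S_t,A_{t:t+K-1})\mid(Z_t,U_{t:t+K-1})$, whence the equality of mutual informations is immediate. The underlying ingredients are the same (determinism of the encoders killing conditional information terms, the chain rule, and the Markov structure), but your version buys two things: it avoids the slightly delicate negation step in the paper, where the strict inequalities $I(X;Z_t,U_{t:t+K-1})<I(X;S_t,U_{t:t+K-1})<I(X;S_t,A_{t:t+K-1})$ are asserted even though the negation of the conclusion only guarantees strictness of the outer comparison; and it produces the explicit conditional independence $X\perp(S_t,A_{t:t+K-1})\mid(Z_t,U_{t:t+K-1})$ as a byproduct, which is essentially the statement the paper must later recover through \cref{prop2} in the proof of \cref{thm1}, so your route would also streamline the downstream argument. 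The composition steps you flag as the main obstacle do go through as you sketch them, provided you invoke the decomposition property of conditional independence (so that $X\perp(S_t,A_{t:t+K-1})\mid Z_{t+K}$ also yields $X\perp(Z_t,U_{t:t+K-1})\mid Z_{t+K}$ when you marginalize out $Z_{t+K}$ in the second factorization); with that noted, the argument is complete.
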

\begin{proof}
We prove by contradictions. Suppose there exists a pair of state action representation $\phi_Z, \psi_U$ and a reward function $r$ such that $I(Z_{t+K}; Z_t, U_{t:{t+K-1}})=I(S_{t+K};S_t, A_{t:{t+K-1}})$, but $I(X;  Z_t, U_{t:{t+K-1}})<I(X;S_t,  U_{t:{t+K-1}})< I(X;S_t,  A_{t:{t+K-1}})$. Then it suffices to show that $I(S_{t+k}; Z_t, U_{t:{t+K-1}})<I(S_{t+K}; S_t, U_{t:{t+K-1}})$, which would give us the desired contradiction since $I(Z_{t+k}; Z_t, U_{t:{t+K-1}})\leq I(S_{t+k}; Z_t, U_{t:{t+K-1}})$, and $I(S_{t+K}; S_t, U_{t:{t+K-1}})\leq I(S_{t+K}; S_t, A_{t:{t+K-1}})$.

Now we look at $I(X; Z_t, S_t, U_{t:{t+K-1}})$. Apply chain rule of mutual information:
\begin{align}
    I(X; Z_t, S_t, U_{t:{t+K-1}})&=I(=X; Z_t|S_t, U_{t:{t+K-1}}) + I(X; S_t, U_{t:{t+K-1}})\\
    &= 0 + I(X; S_t, U_{t:{t+K-1}})
\end{align}
Applying the chain rule in another way, we get 
\begin{align}
I(X; Z_t, S_t, U_{t:{t+K-1}})&=I(X; S_t|Z_t, U_{t:{t+K-1}}) + I(X; Z_t, U_{t:{t+K-1}})
\end{align}
Therefore, we get 
\begin{align}
I(X; S_t, U_{t:{t+K-1}}) = I(X; S_t|Z_t, U_{t:{t+K-1}}) + I(X; Z_t, U_{t:{t+K-1}})
\end{align}
By our assumption that $I(X; Z_t, U_{t:{t+K-1}})<I(X; S_t, U_{t:{t+K-1}})$, we must have 
\begin{equation}
I(X; S_t|Z_t, U_{t:{t+K-1}})>0
\end{equation}

Next, we expand $I(S_{t+K}; Z_t, S_t, U_{t:{t+K-1}})$:
\begin{align}
I(S_{t+K}; Z_t, S_t, U_{t:{t+K-1}})&=I(S_{t+K}; Z_t | S_t, U_{t:{t+K-1}}) + I(S_{t+K}; S_t, U_{t:{t+K-1}})\\
&= 0 + I(S_{t+K}; S_{t}, U_{t:{t+K-1}})
\end{align}
On the other hand, we have
\begin{align}
I(S_{t+K}; Z_t, S_t, U_{t:{t+K-1}})&=I(S_{t+K}; S_t| Z_t, U_{t:{t+K-1}}) + I(S_{t+K}; Z_t, U_{t:{t+K-1}})\\
\end{align}
Thus we have
\begin{align}
I(S_{t+K}; S_t| Z_t, U_{t:{t+K-1}}) + I(S_{t+K}; Z_t, U_{t:{t+K-1}}) = I(S_{t+K}; S_t, U_{t:{t+K-1}})
\end{align}
But then because $I(S_{t+K}; S_t| Z_t, U_{t:{t+K-1}}) > I(X; S_t|Z_t, U_{t:{t+K-1}})$ as $S_t\rightarrow S_{t+K}\rightarrow X$ forms a Markov chain, it is greater than zero by the Inequality (10). 
As a result, $I(S_{t+K}; Z_t, U_{t:{t+K-1}}) < I(S_{t+K}; S_t, U_{t:{t+K-1}})< I(S_{t+K}; S_t, A_{t:{t+K-1}})$. 
This is exactly the contradiction that we would like to show.
\end{proof}

Before proving \cref{thm1}, we need to cite another proposition, which is proved as Lemma 2 in Rakely et al.~\cite{rakelly2021which}
\begin{proposition}
Let $X,Y.Z$ be random variables. Suppose $I(Y, Z)=I(Y,X)$ and $Y\perp Z|X$, then $\exists p(Z|X)$ s.t. $\forall x, p(Y|X=x)=\int p(Y|Z)p(Z|X=x)dz$
\label{prop2}
\end{proposition}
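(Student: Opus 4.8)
The plan is to recognize this statement as the equality case of the data-processing inequality, which converts the one-sided Markov structure in the hypothesis into a two-sided sufficiency relation. The assumption $Y \perp Z \mid X$ says that $Z - X - Y$ form a Markov chain, so $I(Z;Y \mid X) = 0$. First I would expand the joint mutual information $I(X,Z;Y)$ in two ways using the chain rule:
\begin{align}
I(X,Z;Y) &= I(Z;Y) + I(X;Y \mid Z) \\
&= I(X;Y) + I(Z;Y \mid X).
\end{align}
Since $I(Z;Y \mid X)=0$ by the Markov assumption, equating the two expansions gives $I(X;Y \mid Z) = I(X;Y) - I(Z;Y)$. Invoking the hypothesis $I(Y;Z) = I(Y;X)$ then forces $I(X;Y \mid Z) = 0$.

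Next I would translate the vanishing conditional mutual information $I(X;Y \mid Z)=0$ into the conditional independence $Y \perp X \mid Z$, i.e.\ the \emph{reverse} Markov chain $X - Z - Y$. Concretely this means $p(y \mid z, x) = p(y \mid z)$ for (almost) every pair $x,z$. This is the crux of the argument: the equality in the data-processing inequality upgrades the given chain $Z - X - Y$ to a sufficiency statement in which $Z$ retains all of the information $X$ carries about $Y$, so that conditioning on $x$ adds nothing once $z$ is known.

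Finally I would exhibit the claimed channel by simply taking $p(Z \mid X)$ to be the genuine conditional distribution of $Z$ given $X$ induced by the joint law. Marginalizing over $Z$ and substituting the identity $p(y \mid z, x) = p(y \mid z)$ from the previous step yields
\begin{align}
p(Y \mid X = x) &= \int p(Y \mid z, X = x)\, p(z \mid X = x)\, dz \\
&= \int p(Y \mid z)\, p(z \mid X = x)\, dz,
\end{align}
which is exactly the desired representation, with the natural conditional $p(z \mid x)$ witnessing existence. The argument is short, and the only genuinely substantive step is the passage from the mutual-information equality to the reverse conditional independence. The main obstacle is therefore not conceptual but measure-theoretic: rigorously justifying the chain-rule decompositions and the disintegration $p(y \mid z, x) = p(y \mid z)$ in a continuous state space. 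I would handle this by assuming the existence of densities, as the graphical-model setup underlying \autoref{fig:proof_diagram} implicitly does.
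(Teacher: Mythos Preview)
Your argument is correct and is the standard equality-case analysis of the data-processing inequality. Note, however, that the paper does not actually supply its own proof of this proposition: it merely cites it as Lemma~2 of Rakelly et al.~\cite{rakelly2021which}, so there is nothing in the paper to compare your derivation against beyond the bare statement. Your chain-rule expansion of $I(X,Z;Y)$ combined with the hypotheses to force $I(X;Y\mid Z)=0$, and hence the reverse conditional independence $p(y\mid z,x)=p(y\mid z)$, is precisely the mechanism one expects (and is presumably what the cited reference does); the final marginalization step then delivers the claimed channel with the natural conditional $p(z\mid x)$ as witness.
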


\emph{Proof of \cref{thm1}}:
Based on the graphical model, it is clear that 
\begin{align}
\max_{\phi, \psi} I(Z_{t+K}, [Z_t, U_t,..., U_{t+K-1}]) = I(S_{t+K}; [S_t, A_t,..., A_{t+K-1}])
\end{align}
Now define the random variable of return-to-go $\overline{R}_t$ such that 
\begin{equation}
    \overline{R}_t = \sum_{k=0}^{H-t}\gamma^k R_{t+k}
\end{equation}
Based on \cref{prop1}, because 
\begin{align*}
I(Z_{t+K}; Z_t, U_{t:{t+K-1}})=I(S_{t+K}; S_t, A_{t:{t+K-1}})
\end{align*}
we could conclude that 
\begin{align}
I(\overline{R}_{t+K}; Z_t, U_{t:{t+K-1}})=I(\overline{R}_{t+K}; S_t, A_{t:{t+K-1}})
\end{align}
Now applying \cref{prop2}, we get
\begin{align}
    \mathbb{E}_{p(z_t, u_{t:{t+K-1}}|S_t=s, A_{t:{t+K-1}}=a_{t:{t+K-1}})}[p(\overline{R}_t|Z_t, U_{t:{t+K-1}})]=p(\overline{R}_t|S_t=s, A_{t:{t+K-1}})
\end{align}
As a result, when $K=1$, for any reward function $r$, given a state-action pair $(s_1,a_1)$, $(s_2,a_2)$ such that $\phi(s_1)=\phi(s_2), \psi(a_1)=\psi(a_2)$, we have $Q_r(s_1,a_1)=\mathbb{E}_{p(\overline{R}_t|S_t=s_1,A_t=a_1)}[\overline{R}_t]=\mathbb{E}_{p(\overline{R}_t|S_t=s_2,A_t=a_2)}[\overline{R}_t]$. This is because $p(\overline{R}_t|S_t=s_1, A_t=a_1)=p(\overline{R}_t|S_t=s_2,A_t=a_2)$ by Equation (18) as $p(z_t|S_t=s_1)=p(z_t|S_t=s_2), p(u_t|A_t=a_1)=p(u_t|A_t=a_2)$. 
In case when $K>1$, because if $\mathbb{E}[Z_{t+K}, [Z_t, U_t,..., U_{t+K-1}]]=\mathbb{E}[S_{t+K}, [S_t, A_t,..., A_{t+K-1}]]$, then for any $1\leq k\leq K$, $\mathbb{E}[Z_{t+k}, [Z_t, U_t,..., U_{t+k-1}]]=\mathbb{E}[S_{t+k}, [S_t, A_t,..., A_{t+k-1}]]$, including $K=1$, by Data processing Inequality. (Intuitively, this implies that if the information about the transition dynamics at a specific step is lost, the mutual information decreases as the timestep progresses, making it impossible to reach its maximum value at horizon $K$.) Then the same argument should also apply here.

\section{Additional related work}
\subsection{Visual reinforcement learning}
In this paper, we focus primarily on visual-control tasks, and this section reviews relevant prior work in visual RL. 
For visual-control environments, representation learning has shown to be a key.
Many prior works show that learning auxiliary tasks can encourage the representation to be better aligned with the task and thus enhance the performance, such as reconstructing pixel observations~\citep{yarats2021improving}, minimizing bisimulation distances~\citep{zhang2021learning}, fitting extra value functions~\citep{bellemare2019geometric,dabney2021the}, learning latent dynamics models~\citep{gelada2019deepmdp,schwarzer2021dataefficient,sun2022transfer}, multi-step inverse dynamics model~\citep{lamb2022guaranteed, islam2022agentcontroller} or various control-relevant objectives \citep{jaderberg2017reinforcement}. 
Model-based methods are also shown to be successful in efficient visual RL, which learn the transition dynamics based on the encoded observation~\citep{hafner19planet,lee2020stochastic,Hafner2020Dream, ye2021efficientzero}. 
Data augmentation can also be used to smooth out the learned representation or value functions to improve the learning performance~\citep{yarats2021image,hansen2021softda,hansen2021stabilizing}. 

\subsection{Additional works on self-supervised/contrastive learning in reinforcement learning}
In \autoref{s5s2:contrastive_rl}, we summarize the works that apply self-supervised/contrastive learning objectives to improve the sample efficiency of visual reinforcement learning. 
In this section, we discuss the additional works that apply self-supervised/contrastive learning objectives to a broader set of topics in reinforcement learning.

Several recent works have investigated the use of self-supervised/contrastive learning objectives to pre-train representations for reinforcement learning (RL) agents. 
Parisi et al.~\cite{parisi22pvr} propose PVR, which leverages a pre-trained visual representation from MoCo~\cite{he20moco} as the perception module for downstream policy learning.
Xiao et al.~\cite{xiao2022mvp} introduce VIP, a method that pre-trains visual representations using masked autoencoders.
Additionally, Ma et al.~\cite{ma2023vip} propose VIP, which formulates the representation learning problem as offline goal-conditioned RL and derives a self-supervised dual goal-conditioned value-function objective.

Besides pretraining state representations, in goal-conditioned RL, the work by Eysenbach et al. \cite{eysenbach2022contrastive} establishes a connection between learning representations with a contrastive loss and learning a value function. Moreover, self-supervised learning has also been employed for effective exploration in RL. Guo et al. \cite{guo2022byolexplore} introduce BYOL-Explore, a method that leverages the self-supervised BYOL objective \cite{grill20byol} to acquire a latent forward dynamics model and state representation. The disagreement in the forward dynamics model is then utilized as an intrinsic reward for exploration. Another approach, ProtoRL by Yarats et al. \cite{yarats21proto}, presents a self-supervised framework for learning a state representation through a clustering-based self-supervised learning objective in the reward-free exploration setting.

\end{document}